\documentclass[10pt,twocolumn]{article}

\usepackage{times}                          
\usepackage[T1]{fontenc}
\usepackage[margin=2.0cm,top=2.5cm,bottom=2.5cm]{geometry}
\usepackage{amsmath,amssymb,amsthm}
\usepackage{booktabs}
\usepackage{xcolor}
\usepackage{graphicx}
\usepackage{pgfplots}
\pgfplotsset{compat=1.18}
\usepackage[ruled,linesnumbered]{algorithm2e}
\usepackage{hyperref}
\usepackage[numbers,sort&compress]{natbib}
\usepackage{url}
\usepackage[expansion=false]{microtype}
\usepackage{enumitem}
\usepackage{tabularx}
\usepackage{multirow}

\newtheorem{proposition}{Proposition}

\SetAlgoLined
\SetKwProg{Fn}{Function}{:}{end}
\SetKwFunction{Train}{train}
\SetKwFunction{Forward}{HRM\_LM.forward}
\SetKwFunction{OnePass}{\_one\_pass}
\SetKwFunction{OutputHead}{\_output\_head}
\SetKwFunction{SG}{sg}

\definecolor{darkblue}{HTML}{1a3a5c}
\definecolor{medblue}{HTML}{2c5f8a}

\hypersetup{colorlinks=true,linkcolor=darkblue,citecolor=darkblue,urlcolor=medblue,
  pdftitle={Hierarchical vs. Flat Iteration in Shared-Weight Transformers},
  pdfauthor={Sang-Il Han},
  pdfkeywords={language modeling, hierarchical recurrent model, two-speed architecture, parameter sharing, KV cache compression}}

\usepackage{titlesec}
\titleformat{\section}{\normalsize\bfseries}{\thesection.}{0.5em}{}
\titleformat{\subsection}{\small\bfseries}{\thesubsection.}{0.4em}{}
\titlespacing{\section}{0pt}{8pt}{4pt}
\titlespacing{\subsection}{0pt}{5pt}{2pt}

\renewenvironment{abstract}{%
  \centerline{\normalsize\bf Abstract}%
  \vspace{0.5ex}%
  \begin{quote}\small%
}{\end{quote}\vspace{1ex}}

\DeclareMathOperator{\sg}{sg}
\DeclareMathOperator{\RMSNorm}{RMSNorm}
\DeclareMathOperator{\SwiGLU}{SwiGLU}

\begin{document}

\twocolumn[{%
\begin{center}
  {\LARGE\bfseries
   Hierarchical vs.\ Flat Iteration\\[6pt]
   in Shared-Weight Transformers}\\[10pt]
  {\large Sang-Il Han$^{1,2}$}\\[6pt]
  {\normalsize
   $^1$Korea University of Technology and Education
   \quad
   $^2$Biomonster Co.}\\[2pt]
  {\normalsize
   \href{mailto:sihan@koreatech.ac.kr}{\texttt{sihan@koreatech.ac.kr}}}\\[4pt]
   {\small April 2026}\\[4pt]
\end{center}

\begin{abstract}
We present an empirical study of whether \emph{hierarchically structured}
shared-weight recurrence can match the representational quality of
independent-layer stacking in a Transformer-based language model.
\textit{HRM-LM} replaces $L$ independent Transformer layers with a
\textbf{two-speed recurrent pair}---a Fast-module (every step) for local
refinement and a Slow-module (every $T$ steps) for global
compression---iterated $M = N\!\times\!T$ times with shared parameters.

The central and most robust finding, supported by a parameter-matched
Universal Transformer ablation (UniTF, ${\approx}1.2$B) across five
independent runs, is a sharp empirical contrast:
\textbf{flat} shared-weight iteration (UniTF, same block repeated uniformly)
consistently plateaus at ${\approx}7.6$ nats regardless of hyperparameter
settings, while \textbf{hierarchically structured} shared-weight iteration
(HRM, Fast/Slow two-speed) converges to 4.18---a gap of ${\approx}3.4$ nats
not explained by parameter count.

All other comparisons carry important caveats.
Under standardized equal-hyperparameter evaluation (MultiSeed), HRM NT=12
(4.896) trails Transformer L=4 (4.697) by 0.20 nats under equal
hyperparameters---while storing $\approx$2.5\,GB of weights vs.\
T-L4's $\approx$5.1\,GB, an unconditional $2\times$ reduction.
Whether 0.20 nats is an acceptable cost for $2\times$ weight compression
is deployment-context dependent; the gap closes under matched tuning.
A budget-matched grid search (HPSearch, 8 configurations each, seed=42)
recovers HRM best (4.31) over T-L4 best (4.54) by 0.23 nats, but this
result rests on a single seed and has not been replicated for T-L12.

All results are at $\sim$1.2B parameters on OpenWebText with no downstream
task evaluation; whether the findings generalize to larger scales, longer
contexts, or other datasets is entirely open.
The paper's contribution is the empirical observation that
\emph{internal hierarchy within shared-weight iteration matters}---not a
claim that HRM-LM is a practical replacement for the Transformer.
\end{abstract}
\vspace{4pt}
}]

\section{Introduction}

The Transformer \cite{vaswani2017attention} builds contextual representations
by stacking $L$ independent self-attention layers.
A widely held intuition, supported by representational
analyses~\cite{rogers2020primer}, is that lower layers capture surface
patterns, middle layers syntactic structure, and upper layers semantic
relationships---each level mediated by its own weight matrix.
This works well but embeds a specific architectural assumption:
\emph{representational diversity requires distinct parameters at each depth}.
The cost is that both stored weights ($O(Ld^2)$) and the inference KV cache
($O(Lnd)$) grow linearly with depth.

An alternative assumption is that \emph{a single shared transformation,
applied repeatedly with appropriate internal structure, can generate
comparable representational diversity without distinct per-level parameters}.
Recurrent networks demonstrated the viability of this idea but suffered from
vanishing gradients and poor parallelism.
The Universal Transformer~\cite{dehghani2019universal} revisited it with
Transformer blocks, but flat iteration of a single shared block turns out to
perform poorly in practice---reaching val CE $\approx$7.6 at 1.2B parameters
while a standard Transformer converges to 4.2 (this paper, Section~\ref{subsec:unitf}).

This paper asks: \emph{what structure within the shared iteration is necessary
to make iterative refinement work?}
We investigate this by adapting the Hierarchical Reasoning Model
(HRM,~\cite{wang2025hrm}) to autoregressive language modeling.
HRM-LM replaces independent layers with a \textbf{two-speed recurrent pair}:
a Fast-module (every step) for local refinement and a Slow-module (every $T$
steps) for global compression, iterated $M = N\!\times\!T$ times with shared
weights.

The primary contribution is an \emph{empirical probe of the Transformer
architecture}, not a replacement for it.
The central result is that the Fast/Slow hierarchy---rather than the iteration
count or the parameter count---is what makes shared-weight recurrence work:
UniTF (flat shared iteration, 1218M) plateaus at $\approx$7.6 nats across
five independent runs, while HRM (two-speed iteration, 1229M) converges to
4.177---a 3.4-nat gap not explained by parameter count
or hyperparameter settings across five runs.
This suggests that the Fast/Slow hierarchy generates the representational
heterogeneity that independent Transformer layers achieve through distinct
weights.

At the scale studied ($\sim$1.2B parameters, OpenWebText),
this translates into a measurable advantage:
under budget-matched hyperparameter search, HRM best (4.31) outperforms
T-L4 best (4.54) by 0.23 nats at 10k steps.
This advantage depends on per-model tuning and is studied at a single
scale and dataset; the conditions under which it generalizes are
discussed explicitly in Section~\ref{subsec:findings}.

The structural memory consequence of shared weights is real and
unconditional: stored parameters reduce from $O(Ld^2)$ to $O(d^2)$,
and---when $L > M$---KV cache shrinks by $M/L$.
The cost is $2$--$5\times$ longer per-token generation due to sequential
recurrence.

\noindent The contributions are:
\begin{enumerate}[leftmargin=*, itemsep=2pt, topsep=2pt]
  \item \textbf{Architecture.}
        HRM-LM for autoregressive language modeling: gated Fast-module,
        hierarchical Slow-module, $K$-step TBPTT gradient window, and
        multi-source output fusion (Section~\ref{sec:model}).
  \item \textbf{Empirical observation.}
        At $\sim$1.2B parameters on OpenWebText, hierarchically structured
        shared-weight iteration (HRM) outperforms flat shared-weight
        iteration (UniTF) by 3.4 nats at equal parameter count---a result
        robust to hyperparameter variation across five independent runs.
        This is the paper's central finding.
  \item \textbf{Honest accounting of limitations.}
        Under standardized equal-hyperparameter evaluation, HRM loses to
        Transformer L=4. All results are single-scale, single-dataset,
        with no downstream task evaluation.
        We report these results transparently and do not claim
        general architectural superiority.
\end{enumerate}

\section{Background and Related Work}

\subsection{Recurrent Language Models}

The earliest neural language models used feedforward networks
\cite{bengio2003neural}.
Elman \cite{elman1990finding} introduced simple recurrent networks with a
hidden state that propagates sequence information across time steps.
Long Short-Term Memory (LSTM) \cite{hochreiter1997lstm} and Gated Recurrent
Units (GRU) \cite{cho2014gru} addressed vanishing gradients through learned
gating mechanisms.
Both achieved strong results on language modeling benchmarks but remained
limited in their ability to capture very long-range dependencies.

\subsection{Attention Mechanisms and Transformers}

Bahdanau et al.\ \cite{bahdanau2015attention} introduced additive attention
for sequence-to-sequence tasks.
The Transformer \cite{vaswani2017attention} generalizes this to
\emph{scaled dot-product self-attention}, which we now describe formally.

\paragraph{Scaled dot-product attention.}
Given an input matrix $X \in \mathbb{R}^{n \times d}$, projections
$W_Q, W_K \in \mathbb{R}^{d \times d_k}$, $W_V \in \mathbb{R}^{d \times d}$
produce queries, keys, and values:
\begin{equation}
  Q = XW_Q,\quad K = XW_K,\quad V = XW_V,
  \label{eq:qkv}
\end{equation}
with $Q, K \in \mathbb{R}^{n \times d_k}$ and $V \in \mathbb{R}^{n \times d}$.
The attention output is:
\begin{equation}
  \mathrm{Attn}(X) = \mathrm{softmax}\!\left(
    \frac{QK^\top}{\sqrt{d_k}} + M_{\mathrm{causal}}
  \right) V \, W_O,
  \label{eq:attn}
\end{equation}
where $M_{\mathrm{causal}} \in \{0,-\infty\}^{n\times n}$ enforces
causal masking ($M_{ij} = -\infty$ for $j > i$) and
$W_O \in \mathbb{R}^{d \times d}$.
Multi-head attention runs $H$ such operations in parallel, each with
$d_k = d/H$, and concatenates the results.
In practice, $Q$, $K$, $V$ are computed via a single fused projection
$W_{\mathrm{qkv}} \in \mathbb{R}^{d \times 3d}$.

\paragraph{Transformer layer.}
A single Transformer decoder layer applies self-attention followed by a
feed-forward network (FFN) with residual connections and layer normalization:
\begin{align}
  X' &= \mathrm{Norm}(X + \mathrm{Attn}(X)), \label{eq:tfm1}\\
  X'' &= \mathrm{Norm}(X' + \mathrm{FFN}(X')),\label{eq:tfm2}
\end{align}
where $\mathrm{FFN}(x) = W_2\,\sigma(W_1 x)$ (or SwiGLU in modern variants).
An $L$-layer Transformer stacks $L$ independent copies of
\eqref{eq:tfm1}--\eqref{eq:tfm2}, each with its own parameters.

\paragraph{Memory cost.}
During inference, each layer must cache its key and value matrices for
all previous tokens.
For a model with $L$ layers, sequence length $n$, $H$ heads, and
head dimension $d_k$, the KV cache requires:
\begin{equation}
  \mathcal{M}_{\mathrm{KV}}^{\mathrm{Transformer}}
  = 2 \cdot L \cdot n \cdot H \cdot d_k \cdot \text{bytes}
  = O(Lnd).
  \label{eq:kv_transformer}
\end{equation}
For a 12-layer model with $d = 4096$, $H = 16$, $d_k = 256$, and $n = 1024$,
Eq.~\eqref{eq:kv_transformer} gives
$2 \times 12 \times 1024 \times 16 \times 256 \times 2 \approx 0.19$\,GiB
per sequence in bfloat16.
While modest at $n = 1024$, this scales linearly: at $n = 8192$ the same
model requires $\approx 1.5$\,GiB per sequence, and serving hundreds of
concurrent sequences amplifies the cost proportionally.

\paragraph{Contrast with HRM-LM.}
HRM-LM replaces the $L$ independent layers with a \emph{single shared}
\texttt{CausalAttnBlock} that is reused for all $M = N \times T$ recurrent
steps.
During autoregressive generation, a cached inference implementation must
store K and V for \emph{each of the $M$ recurrent steps} at each position,
because step $i$'s attention depends on the K/V it produced at all previous
positions.
The KV cache therefore scales as:
\begin{equation}
  \mathcal{M}_{\mathrm{KV}}^{\mathrm{HRM}}
  = 2 \cdot M \cdot n \cdot H \cdot d_k \cdot \text{bytes}
  = O(Mnd).
  \label{eq:kv_hrm}
\end{equation}
The ratio of KV cache memory relative to a Transformer with $L$ layers is:
\begin{equation}
  \frac{\mathcal{M}_{\mathrm{KV}}^{\mathrm{HRM}}}
       {\mathcal{M}_{\mathrm{KV}}^{\mathrm{Transformer}}}
  = \frac{M}{L}.
  \label{eq:memory_ratio}
\end{equation}
HRM-LM therefore has a \textbf{KV cache advantage only when $L > M$}.
For large Transformers with many layers ($L \gg M$), the reduction is
significant; for our equal-parameter setting (Transformer L=4, HRM M=12),
HRM-LM actually uses $3\times$ \emph{more} KV cache.
The architectural advantage becomes meaningful at scales where $L$ is large
(e.g., $L = 32$--$96$ in modern LLMs) while $M$ remains moderate.

\textit{Note on prior theoretical claim.}
An earlier version of this analysis stated $\mathcal{M}_{\mathrm{KV}}^{\mathrm{HRM}} = O(nd)$,
reasoning that parameter sharing means ``only one layer's KV is needed.''
This is incorrect: parameter sharing reduces \emph{stored weights} but not
inference KV cache, since each of the $M$ steps attends over a distinct
set of positions and must cache its own K/V tensors.
The correct expression is $O(Mnd)$.
However, the $M$ recurrent steps must execute \emph{serially}---step $i$
requires $z_L^{(s,i-1)}$ before $z_L^{(s,i)}$ can begin---whereas
Transformer layers admit compiler-level pipeline optimization.
Empirically, this causes HRM-LM to run $\approx 2\text{--}3\times$ slower
per training iteration than a Transformer of equal parameter count
(detailed in the memory analysis below).
Table~\ref{tab:contrast} summarizes the structural differences.

\begin{table}[h]
\centering\scriptsize
\setlength{\tabcolsep}{3pt}
\caption{Structural comparison: Transformer vs.\ HRM-LM.
$\checkmark$: always valid. $\circ$: conditional on $L > M$.}
\label{tab:contrast}
\resizebox{0.85\columnwidth}{!}{%
\begin{tabular}{lccc}
\toprule
Property & Transformer & HRM-LM & Valid \\
\midrule
Stored params      & $O(Ld^2)$ & $O(d^2)$ & $\checkmark$ \\
Inference KV cache & $O(Lnd)$  & $O(Mnd)$ & $\circ$ ($L > M$) \\
Parameter sets     & $L$ (indep.) & $1$ (shared) & $\checkmark$ \\
Step parallelism   & Pipeline-opt. & Serial & --- \\
Depth scaling      & $+$params  & Free ($N,T,S$) & $\checkmark$ \\
Gradient flow      & $L$ layers & $K$-step window & --- \\
\bottomrule
\end{tabular}%
}
\end{table}

GPT \cite{radford2018gpt,radford2019gpt2,brown2020gpt3} scales the
Transformer decoder to billions of parameters.
Rotary Position Embedding (RoPE) \cite{su2024rope} encodes relative
positions into attention scores, improving length generalization;
HRM-LM adopts RoPE inside its shared \texttt{CausalAttnBlock}.

\subsection{Hybrid and Weight-Sharing Architectures}

The central design question in deep sequence modeling is how to build
rich contextual representations: by \emph{stacking} independent
transformations (each with its own parameters), or by \emph{iterating}
a shared transformation repeatedly.
Stacking assumes that different levels of abstraction require qualitatively
different operations.
Iteration assumes that a single sufficiently expressive transformation,
applied multiple times, can converge toward a rich representation---trading
parameter count for compute time.

Universal Transformers \cite{dehghani2019universal} make the iteration
hypothesis concrete: a single Transformer block is applied for a fixed
number of steps, obtaining depth without additional parameters.
However, flat iteration---the same step repeated uniformly---may not
distinguish between the kinds of transformations needed at different levels
of abstraction.
\textbf{HRM-LM extends this with a two-speed hierarchy}: the Fast-module
iterates rapidly (every step) for fine-grained local refinement, while
the Slow-module iterates slowly (every $T$ steps) for coarser, abstract
compression.
The key hypothesis is that \emph{separating fast and slow timescales
within a shared-parameter architecture recovers some of the representational
diversity that stacking achieves through independent parameters.}

Our LongTrain result is consistent with this: the Universal Transformer baseline
(474M, $M=12$, flat iteration) reaches val CE 6.6, while HRM (1229M, $M=12$,
two-speed) reaches 3.9 with the same number of steps but the hierarchical
structure.
This suggests the two-speed decomposition, not merely the number of
iterations or total parameters, is the key factor.
UniTF directly tests this: UniTF scaled to 1218M parameters (matched to
HRM's 1229M) still plateaus at $\approx$7.6 nats, while HRM converges to
4.177---a 3.4-nat gap at equal parameter count.

DEQ (Deep Equilibrium Models) \cite{bai2019deep} take the iteration
hypothesis to its limit, running until a fixed point is reached.
Mixture-of-Experts (MoE) \cite{shazeer2017moe} pursue a different strategy:
conditional computation to increase effective capacity without proportional
cost.
The Hierarchical Reasoning Model \cite{wang2025hrm} introduced two-speed
recurrence for discrete reasoning tasks; we adapt this to language modeling.

\paragraph{Connection to White-Box Transformer theory.}
A complementary theoretical perspective comes from the White-Box Transformer
programme of Ma et al.~\cite{ma2022principles,yu2023crate}, which interprets
each Transformer layer as one step of a principled iterative algorithm:
alternating between a \emph{compression} phase (sparse coding of the
representation) and an \emph{expansion} phase (maximizing coding-rate
reduction to maintain representational diversity).
Under this view, deep networks are not arbitrary function approximators but
structured unrollings of a convergent optimization process.

Two implications of this framework are suggestive for HRM-LM.
\textit{First}, layer independence is not theoretically necessary:
what matters is that each iteration makes progress toward the objective
(compression $+$ expansion), not that the weight matrices differ.
Shared-weight iteration may be viable \emph{provided} each step
preserves the representational diversity needed for subsequent steps---if
the shared map collapses the representation toward a fixed point,
iteration may fail regardless of how many steps are taken.
The UniTF result---flat shared-weight iteration plateauing at $\approx$7.6
nats---is consistent with this interpretive lens, though it does not
by itself confirm fixed-point collapse as the mechanism.
\textit{Second}, the Fast/Slow hierarchy can be interpreted as a
structural mechanism that may help delay or prevent such collapse:
the Slow-module fires on a coarser timescale and maintains a global
summary state that refreshes each Fast-module step with a
slower-timescale expansion signal, potentially sustaining representational
diversity across iterations.
Through the White-Box framing, the 3.4-nat gap between HRM and UniTF at
equal parameter count is consistent with the interpretive lens suggested
by this framework: \emph{without a mechanism to sustain representational
diversity across iterations, shared-weight recurrence may be unable
to recover the full representational capacity of independent-layer
stacking.}
Whether HRM's Fast/Slow hierarchy satisfies the theoretical conditions
for successful iterative compression in the CRATE sense remains to be
verified analytically; we offer this connection as a motivating hypothesis
for future theoretical investigation, not as an established result.

\section{Model}
\label{sec:model}

\subsection{Architecture Overview}

HRM-LM adapts the two-module recurrent architecture of
\citet{wang2025hrm} to language modeling.
The original HRM was designed for discrete reasoning tasks (Sudoku,
maze-solving, ARC) and trained on small supervised datasets without
pre-training.
We retain the high-level/low-level two-speed structure but replace the
task-specific output head with an autoregressive language model head,
extend training to large-scale text data (OpenWebText), and analyze
the parameter-sharing and KV cache properties of the architecture.

\paragraph{HRM-LM as a recurrent attention model.}
HRM-LM and the standard Transformer share the same computational
primitive: causal self-attention with RoPE and SwiGLU FFN, implemented
as \texttt{CausalAttnBlock}.
Formally, for input $x \in \mathbb{R}^{n \times d}$:
\begin{align}
  a   &= \mathrm{MHSA}_{\mathrm{causal}}(x;\,W_{qkv},\,W_o), \label{eq:causal_attn}\\
  x'  &= \mathrm{RMSNorm}(x + a), \notag \\
  x'' &= \mathrm{RMSNorm}(x' + \mathrm{SwiGLU}(x';\,W_1,\,W_2,\,W_3)),\notag
\end{align}
where $\mathrm{MHSA}_{\mathrm{causal}}$ applies multi-head dot-product attention
with a causal mask and RoPE position embeddings~\cite{su2024rope},
and $\mathrm{SwiGLU}(x) = (xW_1 \odot \mathrm{SiLU}(xW_2))\,W_3$.
The output $x''$ is the block's return value.
All weight matrices in \eqref{eq:causal_attn} belong to the parameter set
($\theta_I$, $\theta_L$, or $\theta_H$) of whichever module instantiates the block.
The Transformer uses $L$ \emph{independent} copies of this block, each
with its own parameters.
HRM-LM uses \emph{three shared} copies---input encoder, Fast-module,
Slow-module---and applies them iteratively $M = N \times T$ times under a
gated state update.
The architecture is therefore a \textbf{recurrent attention model}, and
can be understood as a Universal Transformer
\cite{dehghani2019universal} extended with:
(i) a hierarchical two-speed structure (Fast-module / Slow-module);
(ii) a GRU-style gated update that bounds state magnitude;
(iii) a fixed (non-learned) initial state $z^0$; and
(iv) a $K$-step truncated backpropagation window.
This framing makes the relationship to the Transformer family precise:
HRM-LM is not a separate architecture class but a specific
parameter-sharing and state-gating strategy applied to standard
attention blocks.

HRM-LM processes a token sequence $x \in \mathbb{Z}^n$ through four stages:
(1) a fixed input encoding network $f_I$, (2) $S$ supervision passes each
consisting of $M = N \times T$ recurrent steps, (3) a gated output fusion,
and (4) weight-tied logit projection.
The key hyperparameters are summarized in Table~\ref{tab:notation}.

\paragraph{Comparison with Transformer.}
Both architectures are built exclusively from causal self-attention
blocks; the difference lies entirely in how those blocks are organized.
A Transformer with $L$ layers applies $L$ \emph{independent} blocks
in sequence, each with its own parameters $\theta_1,\ldots,\theta_L$:
\[
  x \xrightarrow{\theta_1} h_1 \xrightarrow{\theta_2} h_2
    \;\cdots\; \xrightarrow{\theta_L} h_L \to \hat{y}.
\]
HRM-LM instead defines two functions with \emph{fixed, shared} parameters
$\theta_L$ and $\theta_H$, and applies them \emph{recursively} for $M$ steps:
\begin{align*}
  \tilde{x},\,z^{(s,0)}
    &\xrightarrow{f_L(\cdot;\,\theta_L),\;f_H(\cdot;\,\theta_H)} z^{(s,1)}
     \xrightarrow{f_L,\,f_H} \cdots \\
    &\hspace{8em}
     \xrightarrow{f_L,\,f_H} z^{(s,M)} \to \hat{y}.
\end{align*}
The key distinction is that $\theta_l$ in a Transformer are \emph{instance parameters}
--- each $\theta_l$ belongs to exactly one layer and is never reused ---
whereas $\theta_L$ and $\theta_H$ in HRM-LM are \emph{function parameters}
of a recursive map: the same weights are called $M$ times, with the output
state of step $i$ fed back as the input to step $i{+}1$.
This means HRM-LM achieves an effective computational depth of
$D_e = M \times S$ applications of the block, using only $O(d^2)$
stored parameters rather than $O(Ld^2)$.
The price is that the sequential recurrence
$z^{(s,i)} \leftarrow f_L(z^{(s,i-1)};\,\theta_L)$ prevents inter-step
parallelism, increasing wall-clock time by $\approx 2\text{--}5\times$
per forward pass (measured at $4.5\times$ for $M=12$, Latency).
These trade-offs are analyzed in the memory and complexity sections below.

\paragraph{Parameter structure.}
HRM-LM's trainable parameters partition into three named sets
(Table~\ref{tab:notation}):
$\theta_I$ (input encoder: one Transformer block applied once per forward pass),
$\theta_L$ (Fast-module: one Transformer block shared across all $M$ recurrent steps),
and $\theta_H$ (Slow-module: one Transformer block shared across all $N = M/T$ H-updates).
The full parameter set is $\theta = \theta_I \cup \theta_L \cup \theta_H$.
Each $\theta_{\cdot}$ is the collection of all weight matrices within that block:
attention projections $(W_Q, W_K, W_V, W_O)$, FFN weights $(W_1, W_2)$,
RMSNorm scales, and any module-specific projections
($W_g^L$, $W_{\mathrm{in}}$, $W_{\mathrm{out}}$, $\alpha$ for the Fast-module;
$W_g^H$, $W_{\mathrm{in}}^H$, $W_{\mathrm{out}}^H$, $\alpha_H$ for the Slow-module;
$W_{og}$ and the weight-tied embedding $W_{\mathrm{emb}}$ are shared
outside these three sets).
All parameters are trained jointly by standard backpropagation;
Algorithms~\ref{alg:forward}--\ref{alg:onepass} make explicit which set
is invoked at each stage of the forward pass.
Because $\theta_L$ and $\theta_H$ are each a \emph{single} block,
stored parameters scale as $O(d^2)$ rather than $O(Ld^2)$ for a Transformer with
$L$ independent layers at the same $d$.

\begin{table}[h]
\centering\small
\setlength{\tabcolsep}{4pt}
\caption{Key hyperparameters, derived quantities, and parameter sets.}
\label{tab:notation}
\scalebox{0.78}{%
\begin{tabular}{llll}
\toprule
Symbol & Meaning & Transformer & HRM-LM \\
\midrule
$d$   & model dim.      & shared & shared \\
$L$   & layers          & $L$ param sets & --- \\
$N$   & Slow-module cycles & ---    & hyperpar. \\
$T$   & L-steps/cycle   & ---    & hyperpar. \\
$S$   & supervision passes & --- & hyperpar. \\
$K$   & gradient window & ---    & hyperpar. \\
$M = N\!\times\!T$ & steps/pass & ($\equiv L$) & derived \\
$D_e = M\!\times\!S$ & eff.\ depth & $L$ & $M\!\times\!S$ \\
\midrule
$\theta_I$ & input encoder params & $\theta_1$ (layer 1) & shared, trained once \\
$\theta_L$ & Fast-module params & $\theta_1,\ldots,\theta_L$ & one set, reused $M$ times \\
$\theta_H$ & Slow-module params & --- & one set, reused $N$ times \\
$\theta$ & all trainable params & $\bigcup_{l=1}^{L}\theta_l$ & $\theta_I \cup \theta_L \cup \theta_H$ \\
\bottomrule
\end{tabular}%
}
\end{table}

\subsection{Input Encoding}

The input encoding $\tilde{x}$ is computed \emph{once} before any recurrence
and held constant across all $M$ steps and $S$ passes:
\begin{align}
  e &= W_{\mathrm{emb}}[x] \in \mathbb{R}^{n \times d}, \\
  e' &= \RMSNorm(e + \mathrm{Attn}(e)), \\
  \tilde{x} &= \RMSNorm(e' + \SwiGLU(e')).
\end{align}
Because $\tilde{x}$ is produced by a depth-1 path from $\theta_I$,
the output path through $\tilde{x}$ has a shorter gradient path than
the recurrent states, making it less susceptible to vanishing gradients.

\subsection{Gated Fast-Module Update}

At every recurrent step $i \in \{1,\ldots,M\}$ within pass $s$,
the Fast-module updates the fast state $z_L^{(s,i)}$.

\paragraph{Form of $f_L$.}
The context vector concatenates the current fast state, the slow state,
and the fixed token encoding:
\begin{align}
  c^{(i)} &= \bigl[z_L^{(s,i-1)};\; z_H^{(s,i-1)};\; \tilde{x}\bigr]
             \in \mathbb{R}^{n \times 3d},  \label{eq:concat}\\
  g^{(i)} &= \sigma\!\bigl(c^{(i)} W_g^L\bigr) \in (0,1)^{n\times d},
             \label{eq:gate}\\
  h^{(i)} &= \mathrm{CausalAttnBlock}\!\bigl(c^{(i)} W_{\mathrm{in}};\,\theta_L\bigr),
             \label{eq:content}\\
  z_L^{(s,i)} &= g^{(i)} \odot z_L^{(s,i-1)}
               + (1-g^{(i)}) \odot \alpha\, h^{(i)} W_{\mathrm{out}}.
               \label{eq:lgate}
\end{align}
The gate $g^{(i)} \in (0,1)^{n\times d}$ interpolates between preserving the
previous state ($g \to 1$) and replacing it with new content ($g \to 0$).
The scalar $\alpha \approx 0.1$ (a learned parameter initialized at 0.1)
controls the injection magnitude.
Crucially, $c^{(i)}$ includes $\tilde{x}$ at every step, giving the
Fast-module direct access to the input token encoding throughout the recurrence.

\textit{Contrast with Transformer.}
A Transformer layer updates its representation via a \emph{residual} add
(Eq.~\ref{eq:tfm1}--\ref{eq:tfm2}): $X'' = X + \Delta$, where the
update $\Delta$ is always applied at full magnitude.
The Fast-module instead uses a \emph{learned gate} $g^{(i)}$ to dynamically
control \emph{how much} of the new content replaces the current state,
enabling the model to suppress uninformative updates and maintain
state stability over $M$ recurrent steps (Proposition~\ref{prop:stability}).
Furthermore, the context $c^{(i)}$ concatenates both
$z_L^{(s,i-1)}$ and $z_H^{(s,i-1)}$, so the Fast-module always has access
to the slow hierarchical context---a cross-scale interaction absent in a
standard Transformer.

\textit{Design choice: full causal attention vs.\ RNN-style update.}
Although the Fast-module's parameters $\theta_L$ are shared across all $M$
steps, the \emph{input} to the attention block differs at every step because
$c^{(i)} = [z_L^{(s,i-1)};\,z_H^{(s,i-1)};\,\tilde{x}]$ changes as the
recurrent state evolves.
Consequently, the K and V matrices produced at each step are distinct:
$K^{(i)}(t) = W_K\,\mathrm{proj\_in}(c^{(i)}(t)) \neq K^{(j)}(t)$ for $i \neq j$.
This means that a cached-inference implementation must maintain $M$ separate
KV caches, one per recurrent step, giving $O(Mnd)$ total.

An alternative design would replace the full causal self-attention with an
\textbf{RNN-style position-wise update}:
$z_L^{(s,i)} \leftarrow f(z_L^{(s,i-1)}, z_H^{(s,i-1)}, \tilde{x})$,
where $f$ operates on each position independently without attending to other
positions.
This would reduce inference memory to $O(d)$ (only the state vectors),
as no KV cache is needed.
Table~\ref{tab:design_choice} summarizes the trade-off.

\begin{table}[h]
\centering\small\setlength{\tabcolsep}{3pt}
\caption{Design choices for the Fast-module update.}
\label{tab:design_choice}
\resizebox{\columnwidth}{!}{%
\begin{tabular}{lccc}
\toprule
Update type & KV cache & Inter-token & Expressiveness \\
            &          & interaction & \\
\midrule
Full causal attn (ours) & $O(Mnd)$ & \checkmark & High \\
RNN position-wise       & $O(d)$   & $\times$   & Lower \\
Hybrid ($k$ attn steps) & $O(knd)$, $k{<}M$ & Partial & Medium \\
\bottomrule
\end{tabular}%
}
\end{table}

We adopt full causal attention because language modeling requires each
token to attend to its entire preceding context at every recurrent step;
removing inter-token interaction would severely limit the model's ability
to build contextual representations across the sequence.
The RNN-style variant is more memory-efficient but corresponds to a weaker
model, closer to a position-wise feedforward network iterated $M$ times.
The hybrid option---applying full attention only on a subset $k < M$ of
steps---is a promising direction for reducing KV cache while preserving
most of the representational capacity, and is left as future work.

\subsection{Slow-Module Update}

The Slow-module updates the slow state every $T$ steps (at cycle boundaries):
\begin{equation}
  z_H^{(s,i)} =
  \begin{cases}
    f_H\!\bigl(z_H^{(s,i-1)},\, z_L^{(s,i)};\,\theta_H\bigr)
      & \text{if } i \equiv 0 \pmod{T}, \\
    z_H^{(s,i-1)} & \text{otherwise.}
  \end{cases}
\end{equation}

\paragraph{Form of $f_H$.}
The Slow-module shares the same GRU-style gated structure as the Fast-module,
but its context vector is formed from only two sources---the current slow state
and the updated fast state---without direct access to the token encoding $\tilde{x}$:
\begin{align}
  c_H^{(i)} &= \mathrm{RMSNorm}\!\bigl([z_H^{(s,i-1)};\;z_L^{(s,i)}]\bigr)
               \in \mathbb{R}^{n \times 2d}, \\
  g_H^{(i)} &= \sigma\!\bigl(c_H^{(i)}\,W_g^H\bigr) \in (0,1)^{n\times d}, \\
  h_H^{(i)} &= \mathrm{CausalAttnBlock}\!\bigl(c_H^{(i)}\,W_{\mathrm{in}}^H;\,\theta_H\bigr), \\
  z_H^{(s,i)} &= g_H^{(i)} \odot z_H^{(s,i-1)}
               + (1 - g_H^{(i)}) \odot \alpha_H\, h_H^{(i)}\, W_{\mathrm{out}}^H.
\end{align}
The absence of $\tilde{x}$ in $c_H$ is architecturally significant:
the Slow-module receives token-level information only indirectly, through
$z_L^{(s,i)}$, which has already integrated $\tilde{x}$ over the preceding
$T$ Fast-module steps.
The scalar $\alpha_H \approx 0.1$ is a separate learned parameter
(initialized at 0.1, independent of the Fast-module's $\alpha$),
controlling the injection magnitude of new content into $z_H$.
This enforces the intended hierarchy---fine-grained token context is first
processed by the Fast-module, then distilled into the slow state at cycle boundaries.

\textit{Contrast with Transformer.}
A Transformer has no analog to the two-speed hierarchy: every layer
processes the sequence at the same temporal resolution.
The Slow-module introduces an explicit \emph{slow path} that fires only $N$
times per pass (versus $M = N \times T$ times for the Fast-module),
acting as a global context aggregator between fast-path cycles.
This is structurally similar to hierarchical RNNs
\cite{el1996hierarchical} but implemented with attention-based modules
and a fixed parameter set shared across all cycles.

\subsection{State Initialization and Supervision Passes}

States are initialized from fixed (non-learned) prototype vectors:
\begin{equation}
\begin{aligned}
  z_H^0, z_L^0 &\sim \mathcal{TN}(0,1;[-2,2]) \in \mathbb{R}^d \\
  &\text{(fixed prototype vectors, frozen after initialization)}
\end{aligned}
\end{equation}
where $\mathcal{TN}$ denotes the truncated normal distribution
($\mu\!=\!0$, $\sigma\!=\!1$, clipped to $[-2,2]$),
broadcast to
$z_H^{(1,0)} = z_H^0 \!\cdot\! \mathbf{1}_n^\top \in \mathbb{R}^{n \times d}$.

For $s = 1,\ldots,S$, a full pass of $M$ steps is executed.
Between passes, states are \emph{detached} (stop-gradient):
\begin{equation}
  z_{H,L}^{(s+1,0)} = \sg\!\bigl(z_{H,L}^{(s,M)}\bigr), \qquad
  \frac{\partial\,\sg(v)}{\partial\theta} \equiv 0.
  \label{eq:detach}
\end{equation}
The accumulated loss is:
\begin{equation}
  L_{\mathrm{acc}} = \frac{1}{S}\sum_{s=1}^{S} L_s, \qquad
  \nabla_\theta L_{\mathrm{acc}} = \frac{1}{S}\sum_{s=1}^{S}\nabla_\theta L_s.
\end{equation}

\subsection{Output Fusion}
\label{subsec:output_fusion}

After $M$ steps, the three sources are fused:
\begin{align}
  \mathbf{w} &= \mathrm{softmax}\!\left(
    \frac{[h_H;\, h_L;\, h_I]\, W_{og}}{\tau}
  \right), \notag\\
  h_{\mathrm{out}} &= w_1 h_H + w_2 h_L + w_3 h_I,
\end{align}
where $h_H, h_L, h_I$ are RMSNorm-normalized versions of $z_H^{(s,M)}$,
$z_L^{(s,M)}$, and $\tilde{x}$ respectively,
$\tau$ is a learned temperature, and $W_{og} \in \mathbb{R}^{3d \times 3}$.
The logits and mean gate weight used in the loss are then:
\begin{align}
  \mathrm{logits} &= h_{\mathrm{out}}\,W_{\mathrm{emb}}^\top
    \in \mathbb{R}^{n \times |\mathcal{V}|},
    \quad\text{(weight-tied projection)} \\
  \bar{w} &= \frac{1}{n}\sum_{t=1}^{n} \mathbf{w}_t \in \mathbb{R}^3,
\end{align}
where $W_{\mathrm{emb}}$ is the token embedding matrix (shared with the input
embedding layer) and $\bar{w}$ is the sequence-averaged gate weight
used in the entropy regularization term $-\lambda H(\bar{w})$.

\subsection{Training Algorithm}

Algorithms~\ref{alg:hrm}--\ref{alg:onepass} together specify HRM-LM training.
We highlight the design choices that distinguish HRM from a standard Transformer
training loop.

\textbf{Two-speed update (Algorithm~\ref{alg:onepass}, Loops 4a/4b).}
At each recurrent step $i$, the Fast-module updates \emph{every} step, while the
Slow-module updates only when $i \bmod T = 0$.
This is the architectural core of HRM: the Slow-module operates on a slower
timescale, accumulating abstract context from $T$ successive Fast-module states
before compressing them into $z_H$.
In the code: \texttt{if $i$ mod $T$ == 0: $z_H \leftarrow$ H\_net($z_H, z_L$)}.

\textbf{Gradient window $K$ (Algorithm~\ref{alg:onepass}).}
Of the $M$ recurrent steps, only the final $K$ carry gradients;
the preceding $M - K$ steps run under \texttt{torch.no\_grad()}.
This is truncated backpropagation through time (TBPTT), making gradient
computation $O(K)$ rather than $O(M)$ in memory and time.
Setting $K < M$ does not prevent the early steps from contributing to the
forward computation --- they still warm up $z_H$ and $z_L$ --- but gradients
are not propagated through them.
Empirically, $K = 2$ ($16.7\%$ of $M = 12$) suffices.

\textbf{Supervision passes $S$ (Algorithm~\ref{alg:forward}, Loop 3).}
The forward pass runs \emph{OnePass} $S$ times.
After each pass, the output states $(z_H^{(s,M)}, z_L^{(s,M)})$ are
\emph{detached} (\texttt{sg}: stop-gradient) before being used as the initial
states for the next pass.
Each pass produces a cross-entropy loss $L_s$; the final loss is their average.
This multi-pass supervision smooths the gradient signal across recurrent depths
without increasing memory beyond a single pass, since detachment prevents
gradient backpropagation across pass boundaries.

\textbf{Weight sharing.}
A single weight set $\theta_L$ (resp.\ $\theta_H$) is reused across all
$M$ recurrent steps and all $S$ supervision passes.
Stored parameters are $O(d^2)$ rather than $O(Ld^2)$ for a depth-$L$
Transformer at the same hidden size.

\textbf{Gate entropy regularization.}
The loss includes $-\lambda H(\bar{w})$, where $\bar{w}$ is the mean gate
activation across the sequence and $H$ is entropy.
This prevents the output fusion gate (Section~\ref{subsec:output_fusion})
from collapsing to a constant, ensuring both $z_H$ and $z_L$ contribute to
the output.

\begin{algorithm}[t]
\footnotesize\DontPrintSemicolon
\caption{HRM-LM Training — \texttt{train()}}
\label{alg:hrm}
\tcc{Loop 1: outer training}
\While{iter $\leq$ max\_iters}{
  \tcc{Loop 2: gradient accumulation}
  \For{micro $= 1$ \KwTo $G$}{
    $\mathrm{loss} \leftarrow \mathrm{forward}(\theta, X, Y) / G$\;
    $\mathrm{loss.backward()}$\tcp*{accumulate .grad}
  }
  clip grad;\quad optimizer.step();\quad iter $+= 1$\;
}
\end{algorithm}

\begin{algorithm}[t]
\footnotesize\DontPrintSemicolon
\caption{HRM-LM Forward — \texttt{HRM\_LM.forward()}}
\label{alg:forward}
\tcc{$\mathrm{expand}(z^0) \triangleq z^0 \cdot \mathbf{1}_n^\top \in \mathbb{R}^{n\times d}$
     (broadcast scalar proto to sequence length $n$)}
\tcc{$\mathrm{sg}(v)$: stop-gradient, $\partial\,\mathrm{sg}(v)/\partial\theta \equiv 0$
     (\texttt{.detach()} in PyTorch)}
\tcc{$\mathrm{CE}(\mathrm{logits},Y) \triangleq -\tfrac{1}{n}\sum_t \log p(y_t|\mathrm{logits}_t)$
     (token-averaged cross-entropy)}
\tcc{$\mathrm{OnePass}$: Algorithm~\ref{alg:onepass} ($M$ recurrent steps)}
$\tilde{x} \leftarrow \mathrm{InputProj}(X;\,\theta_I)$\tcp*{computed once}
$z_H^{(1,0)},\;z_L^{(1,0)} \leftarrow \mathrm{expand}(z_H^0,\,z_L^0)$\;
$L_{\mathrm{acc}} \leftarrow 0$\;
\tcc{Loop 3: supervision passes}
\For{$s = 1$ \KwTo $S$}{
  $z_H^{(s,M)},z_L^{(s,M)} \leftarrow \mathrm{OnePass}(\tilde{x},\,z_H^{(s,0)},\,z_L^{(s,0)})$\;
  $\mathrm{logits},\bar{w} \leftarrow \mathrm{OutputFusion}(z_H^{(s,M)},\,z_L^{(s,M)},\,\tilde{x})$\tcp*{Sec.~\ref{subsec:output_fusion}}
  $L_s \leftarrow \mathrm{CE}(\mathrm{logits},Y) - \lambda H(\bar{w})$\;
  $L_{\mathrm{acc}} \mathrel{+}= L_s/S$\;
  $z_H^{(s+1,0)} \leftarrow \mathrm{sg}(z_H^{(s,M)})$\tcp*{detach — no recompute}
  $z_L^{(s+1,0)} \leftarrow \mathrm{sg}(z_L^{(s,M)})$\;
}
\KwRet $L_{\mathrm{acc}}$\;
\end{algorithm}

\begin{algorithm}[t]
\footnotesize\DontPrintSemicolon
\caption{One Recurrent Pass — \texttt{\_one\_pass()}}
\label{alg:onepass}
$K_0 \leftarrow \max(0,\,M-K)$\;
\tcc{Loop 4a: no-grad warmup ($M-K$ steps)}
\textbf{with} \texttt{torch.no\_grad():}
\For{$i = 1$ \KwTo $K_0$}{
  $z_L \leftarrow \mathrm{L\_net}(z_L,\,z_H,\,\tilde{x};\,\theta_L)$\;
  \lIf{$i \bmod T = 0$}{$z_H \leftarrow \mathrm{H\_net}(z_H,\,z_L;\,\theta_H)$}
}
\tcc{Loop 4b: gradient window ($K$ steps)}
\For{$i = K_0+1$ \KwTo $M$}{
  $z_L \leftarrow \mathrm{L\_net}(z_L,\,z_H,\,\tilde{x};\,\theta_L)$\;
  \lIf{$i \bmod T = 0$}{$z_H \leftarrow \mathrm{H\_net}(z_H,\,z_L;\,\theta_H)$}
}
\KwRet $z_H,\;z_L$\;
\end{algorithm}


\section{Supporting Theoretical Observations}

The following two propositions provide design intuition for the gated
update and the TBPTT window, respectively.
Both rest on conditions that cannot be verified analytically in general
and should be treated as heuristic motivation rather than guarantees;
full statements and proofs are in Appendix~\ref{app:theory}.

\textbf{Proposition~1 (Conditional Stability).}
When attention-block outputs are bounded, the gated convex combination
enforces an element-wise bound on state magnitude at every step,
providing a direct architectural stabilizer compared to Transformer
residual additions.
Empirically, hidden-state norms remain stable throughout training,
consistent with this bound.

\textbf{Proposition~2 (Gradient Amplification, heuristic).}
Under a path-alignment assumption (gradient paths through the shared
parameter do not cancel), a $K$-step TBPTT window accumulates
gradient signal proportional to $K$ relative to a 1-step window.
This provides a design rationale for keeping $K/M$ large;
empirically, larger $K$ consistently yields lower val CE
(see Ablation, Section~\ref{subsec:ablation}).

\subsection{Complexity}

Table~\ref{tab:complexity} compares HRM-LM with a $L$-layer Transformer
($A = nd + Hn^2$, $C = O(n^2d + nd^2)$).

\begin{table}[h]
\centering\small\setlength{\tabcolsep}{4pt}
\caption{Asymptotic complexity comparison.}
\label{tab:complexity}
\begin{tabular}{lcc}
\toprule
Dimension & Transformer & HRM-LM \\
\midrule
Stored params    & $O(Ld^2)$ & $O(d^2)$ \\
Train activations & $O(LA)$ & $O(KA)$ \\
Train FLOPs      & $3LC$    & $S(M\!+\!2K)C$ \\
Inference KV cache & $O(Lnd)$ & $O(Mnd)$ \\
\bottomrule
\end{tabular}
\end{table}

\section{Experiments}
\label{sec:experiments}

\subsection{Setup}

All models are trained on OpenWebText \cite{gokaslan2019openwebtext}
using the tiktoken GPT-2 tokenizer (vocab size 50,257).
Training runs for \textbf{10,000 iterations} with batch size 12,
sequence length 1,024,
on 6 NVIDIA RTX 6000 Blackwell GPUs using DDP.
Optimization uses fused AdamW ($\beta_1 = 0.9$, $\beta_2 = 0.95$,
weight decay 0.1, gradient clip 1.0) with cosine learning-rate decay.
Mixed precision (bfloat16) is used throughout.
We report validation cross-entropy (CE) on the OpenWebText held-out split.

\textbf{Comparison axes.}
Each experiment uses a single comparison axis to avoid conflating orthogonal
dimensions (Table~\ref{tab:axes}).
Experiments 1--5, 7, A, B, C, and D are reported in this paper.
EqualFLOPs's Transformer baseline suffered a training failure confirmed
by LongTrain's extended run (Section~\ref{subsec:equalflops});
TF-Baseline provides an extended LR search for T-L12.

\begin{table}[h]
\centering\scriptsize
\caption{Comparison axes. \textbf{Bold}: reported.
$\ddagger$: reported, baseline incomplete.}
\label{tab:axes}
\setlength{\tabcolsep}{3pt}
\resizebox{\columnwidth}{!}{%
\begin{tabular}{llll}
\toprule
 & Exp & Fixed & Measured \\
\midrule
\textbf{Rep.} & \textbf{1} (equal params)   & Stored params $\approx$1.23B        & val CE \\
$\ddagger$    & \textbf{2} (equal FLOPs)    & FLOPs/iter                          & val CE (T-L12 partial) \\
\textbf{Rep.} & \textbf{3} (long train)     & FLOPs/iter, 20k steps               & val CE \\
$\ddagger$    & \textbf{A} (LR search)      & Best LR for T-L12                   & val CE (60k, converged) \\
\textbf{Rep.} & \textbf{4} (inference)      & Architecture                        & latency, peak memory \\
\textbf{Rep.} & \textbf{5} (ablation)       & $d$, base config                    & val CE vs $K/S/N/T$ \\
$\ddagger$    & \textbf{B} (UniTF matched)  & Params $\approx$1.23B               & val CE (d=5760: 7.62@10k; d=7296: 7.578@10k) \\
\textbf{Rep.} & \textbf{C} (multi-seed)     & 3 seeds                             & val CE mean$\pm$std \\
\textbf{Rep.} & \textbf{D} (decoding)       & Architecture                        & tok/sec, peak memory \\
\textbf{Rep.} & \textbf{7} (H/L analysis)   & Trained HRM                         & gate stats, state norms \\
\bottomrule
\end{tabular}%
}
\end{table}

\subsection{EqualParam: Equal Stored Parameters}

We fix $d = 4096$ and vary $N \times T \in \{4, 8, 12\}$ for HRM-LM,
keeping stored parameters at $\approx 1.23$B.
Two Transformer baselines are included: L=2 (743M, smaller) and
L=4 (1280M, closely matched).
All models are trained for 10,000 iterations.

\begin{table}[h]
\centering\scriptsize\setlength{\tabcolsep}{3pt}
\caption{EqualParam --- equal stored parameters ($d=4096$, 10k steps).
$\Delta$ is relative to Transformer L=4.
All models use scheduled LR (warmup + cosine decay) with
scaled initialization and K/M-proportional gradient clipping.
HRM NT=12 here uses the EqualParam LR schedule (lr$=4\!\times\!10^{-4}$, warmup$=500$);
Ablation and UniTF use a lower LR (lr$=1.5\!\times\!10^{-4}$, warmup$=1000$),
yielding 4.177---the 0.1-nat difference reflects per-configuration LR sensitivity.}
\label{tab:equalparam}
\setlength{\tabcolsep}{3pt}
\begin{tabular}{lrrc}
\toprule
Model & Params & val CE & $\Delta$ vs T-L4 \\
\midrule
HRM $N\!\times\!T\!=\!8$  (N=2,T=4) & 1229M & \textbf{4.239} & $-$0.323 \\
HRM $N\!\times\!T\!=\!12$ (N=4,T=3) & 1229M & 4.284 & $-$0.278 \\
HRM $N\!\times\!T\!=\!4$  (N=2,T=2) & 1229M & 4.450 & $-$0.112 \\
Transformer L=2 &  743M & 4.509 & $-$0.053 \\
Transformer L=4 & 1280M & 4.562 & ---       \\
\bottomrule
\end{tabular}
\end{table}

Results are shown in Table~\ref{tab:equalparam}.
At equal stored parameters and 10,000 training iterations,
all three HRM variants outperform both Transformer baselines.
HRM with $N\!\times\!T\!=\!8$ reaches val CE 4.239 versus 4.562 for Transformer
L=4, a gap of $-0.323$ nats in favor of HRM-LM.
HRM with $N\!\times\!T\!=\!12$ also outperforms Transformer L=4 by $-0.278$ nats.
HRM with $N\!\times\!T\!=\!4$ ($T=2$) converges stably to 4.450, outperforming
Transformer L=4 by $-0.112$ nats and Transformer L=2 (743M) by $-0.059$ nats.
Even Transformer L=2 (743M, substantially smaller) at 4.509 outperforms
Transformer L=4 at this training budget, suggesting that the larger
L=4 model requires more steps to fully utilize its capacity.

\textbf{Stability of $N\!\times\!T\!=\!4$.}
In a prior run with a higher LR ($1.5\!\times\!10^{-4}$, before scaling), HRM
$N\!=\!2$, $T\!=\!2$ exhibited severe gradient explosion at iter 1,500
(grad norm $=19.4$, val CE rising from 5.56 to 6.35), finishing at 5.515.
The current run resolves this instability through three coordinated changes:
(1) \emph{Scaled initialization}: shared-block weights use
    $\sigma = 0.02/\sqrt{M}$ (here $M=4$, $\sigma=0.01$) instead of 0.02,
    reducing the magnitude of initial Slow-module perturbations to $z_L$;
(2) \emph{Extended warmup}: $\max(1000, N\!\times\!T\!\times\!100) = 1000$ steps
    instead of 500, giving AdamW's second-moment estimator time to stabilize
    before full-LR steps are taken;
(3) \emph{K/M-proportional gradient clipping}: \texttt{max\_norm} $= 1.0 \times K/M
    = 1.0 \times 2/4 = 0.5$, halving the effective clip threshold to account for
    the fact that $K=2$ gradient steps accumulate on shared parameters.
With these changes, peak grad norm across all 10,000 iterations is 2.859 (at iter
3,000), and val CE improves monotonically after iter 3,500, with only a minor
oscillation of $+0.010$ nats at iter 6,000.
We therefore conclude that the prior explosion was caused by the combination of an
overscaled LR, large initial activations, and an aggressive clip threshold---not by
the $T=2$ architecture itself.
$T \geq 2$ is stable with the above corrections applied.

\textbf{Comparison across $N\!\times\!T$.}
Among the three HRM configurations, $N\!\times\!T = 8$ (val CE 4.239)
slightly outperforms $N\!\times\!T = 12$ (4.284) and $N\!\times\!T = 4$ (4.450),
suggesting that the optimal product depends on the specific N/T factorization
and that larger $N\!\times\!T$ does not monotonically improve performance at
a fixed training budget.
All three stable HRM variants show faster early-phase convergence
(iterations 0--3000) than Transformer L=4, consistent with the gradient
amplification effect described in Proposition~\ref{thm:amplification}.

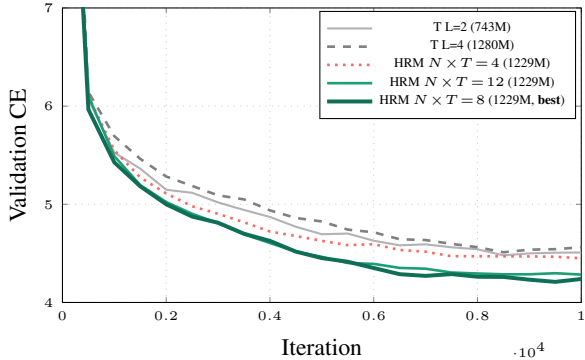
\begin{figure}[t]
\centering
\begin{tikzpicture}
\begin{axis}[
  width=\columnwidth, height=5.5cm,
  xlabel={\small Iteration}, ylabel={\small Validation CE},
  xmin=0, xmax=10000, ymin=4.0, ymax=7.0,
  xtick={0,2000,4000,6000,8000,10000},
  xticklabel style={font=\tiny},
  yticklabel style={font=\tiny},
  xlabel style={font=\small},
  ylabel style={font=\small},
  legend style={font=\tiny, at={(0.99,0.99)}, anchor=north east,
                row sep=-2pt, inner sep=2pt},
  clip=true,
  grid=major, grid style={dotted, gray!40},
  tick style={major tick length=2pt},
]
\addplot[color=gray!60, line width=0.8pt] coordinates {
  (0,11.672)(500,6.072)(1000,5.527)(1500,5.363)(2000,5.148)
  (2500,5.117)(3000,5.019)(3500,4.942)(4000,4.871)(4500,4.771)
  (5000,4.695)(5500,4.702)(6000,4.629)(6500,4.582)(7000,4.592)
  (7500,4.562)(8000,4.542)(8500,4.480)(9000,4.500)(9500,4.507)(10000,4.509)};
\addlegendentry{T L=2 (743M)}
\addplot[color=gray, line width=1pt, dashed] coordinates {
  (0,11.604)(500,6.145)(1000,5.694)(1500,5.463)(2000,5.283)
  (2500,5.187)(3000,5.091)(3500,5.050)(4000,4.938)(4500,4.861)
  (5000,4.825)(5500,4.742)(6000,4.715)(6500,4.644)(7000,4.636)
  (7500,4.597)(8000,4.563)(8500,4.511)(9000,4.538)(9500,4.543)(10000,4.562)};
\addlegendentry{T L=4 (1280M)}
\addplot[color=red!60, line width=1pt, dotted] coordinates {
  (0,10.952)(500,6.100)(1000,5.549)(1500,5.273)(2000,5.111)
  (2500,4.980)(3000,4.903)(3500,4.817)(4000,4.723)(4500,4.677)
  (5000,4.628)(5500,4.584)(6000,4.594)(6500,4.535)(7000,4.517)
  (7500,4.471)(8000,4.471)(8500,4.470)(9000,4.469)(9500,4.466)(10000,4.450)};
\addlegendentry{HRM $N\!\times\!T\!=\!4$ (1229M)}
\addplot[color={rgb,255:red,29;green,158;blue,117}, line width=1pt] coordinates {
  (0,10.941)(500,6.103)(1000,5.490)(1500,5.195)(2000,5.023)
  (2500,4.903)(3000,4.807)(3500,4.694)(4000,4.606)(4500,4.518)
  (5000,4.464)(5500,4.402)(6000,4.393)(6500,4.352)(7000,4.344)
  (7500,4.305)(8000,4.295)(8500,4.287)(9000,4.287)(9500,4.298)(10000,4.284)};
\addlegendentry{HRM $N\!\times\!T\!=\!12$ (1229M)}
\addplot[color={rgb,255:red,15;green,110;blue,86}, line width=1.5pt] coordinates {
  (0,10.945)(500,5.966)(1000,5.427)(1500,5.183)(2000,5.000)
  (2500,4.875)(3000,4.812)(3500,4.700)(4000,4.627)(4500,4.518)
  (5000,4.452)(5500,4.415)(6000,4.351)(6500,4.288)(7000,4.271)
  (7500,4.289)(8000,4.262)(8500,4.260)(9000,4.232)(9500,4.210)(10000,4.239)};
\addlegendentry{HRM $N\!\times\!T\!=\!8$ (1229M, \textbf{best})}
\end{axis}
\end{tikzpicture}
\caption{EqualParam learning curves (val CE, 10k steps).
All three HRM variants outperform both Transformer baselines.
HRM $N\!\times\!T\!=\!4$ ($T\!=\!2$), which previously exploded under a higher LR,
now converges stably to 4.450 after applying scaled initialization,
extended warmup, and K/M-proportional gradient clipping (max\_norm$=0.5$).
The three HRM variants outperform Transformer L=4 by $0.11$--$0.32$ nats.
Initial values (iter 0, $\approx$10.9--11.7) are off-chart.}
\label{fig:equalparam}
\end{figure}

\subsection{EqualFLOPs: Equal FLOPs (Preliminary)}
\label{subsec:equalflops}

We match HRM-LM configurations to Transformer L=12 (3427M) in training
FLOPs per iteration using $S(M + 2K)C \approx 3LC$, and train all models
for 10,000 iterations.
Table~\ref{tab:equalflops} and Figure~\ref{fig:equalflops} show the results.

\begin{table}[h]
\centering\scriptsize
\caption{EqualFLOPs --- equal FLOPs ($\approx$ Transformer L=12), 10k steps.
$\ddagger$: partial convergence (grad spikes at iter 500/1500, plateau after iter 8500).
HRM advantage is real but T-L12 is not fully converged at 10k; LongTrain extends to 20k.}
\label{tab:equalflops}
\setlength{\tabcolsep}{3pt}
\begin{tabular}{lrrc}
\toprule
Model & Params & val CE & $\Delta$ vs T-L12 \\
\midrule
HRM $d\!=\!5120$, $N\!\times\!T\!=\!9$ & 1857M & \textbf{4.165} & $-$1.231 \\
HRM $d\!=\!4096$, $N\!\times\!T\!=\!12$& 1229M & 4.233 & $-$1.163 \\
HRM $d\!=\!2048$, $N\!\times\!T\!=\!36$&  359M & 4.364 & $-$1.032 \\
Transformer L=12 & 3427M & 5.396$^\ddagger$ & --- \\
\bottomrule
\end{tabular}
\end{table}

\textbf{Transformer L=12 partial convergence.}
With the corrected training configuration (lr$=5\!\times\!10^{-5}$, warmup$=1000$,
\texttt{max\_norm}$=0.3$), Transformer L=12 converges meaningfully, reaching
val CE of 5.396 at 10k steps---an improvement of 2.19 nats over the previous
failure result (7.585).
However, convergence is not complete at 10k steps: two gradient spikes occur
early (iter 500: grad norm $=109$; iter 1500: grad norm $=15.2$), causing the
model to spend the first 2,000 iterations recovering from initialization instability.
Convergence then proceeds at 0.25 nat/1k iter (iter 2000--5000), decelerating to
0.05 nat/1k iter (iter 5000--8500), and effectively plateauing after iter 8500
(best val CE 5.378 at iter 8500, final 5.396).
LongTrain (Section~\ref{subsec:longtrain}) extends training to 20k steps to assess
further convergence.

The HRM--Transformer gap of 1.03--1.23 nats is consistent across model sizes:
all HRM variants use substantially fewer stored parameters (359--1857M vs.\ 3427M)
yet achieve lower val CE.
This result should be interpreted cautiously since T-L12 is not fully converged
at 10k steps; TF-Baseline provides an extended baseline.
Notably, HRM with $d{=}2048$, $N\!\times\!T\!=\!36$ (359M) achieves val CE 4.364
using $\approx\!1/10$ of T-L12's stored parameters, with the advantage lying
entirely in parameter compression since KV cache is $3\times$ larger than T-L12
when $M > L$.

\subsection{LongTrain: Extended Training (20k Steps)}
\label{subsec:longtrain}

EqualFLOPs shows that Transformer L=12 reaches val CE 5.396 at 10k steps with the
corrected LR, but plateaus after iter 8500.
LongTrain extends training to 20,000 steps with a further-reduced LR
(lr$=3\!\times\!10^{-5}$) to assess full convergence, and adds a Universal
Transformer (single shared block, 474M params) as a hierarchical ablation baseline.
Table~\ref{tab:longtrain} shows the results.

\begin{table}[h]
\centering\small\setlength{\tabcolsep}{3pt}
\caption{LongTrain --- extended training (20k steps; T-L12 extended to 60k, confirmed converged).
  $\ddagger$: partial convergence, still slowly improving.
  $\dagger$: training failure (warmup$\times$2 + K/M$=5.6\%$ incompatible).
  $\star$: spike-triggered descent from long plateau.
  HRM NT=12 here uses the EqualFLOPs/3 LR schedule (lr$=5\!\times\!10^{-5}$,
  warmup$=1000$, different from EqualParam and Ablation/B); the 10k value (4.233)
  lies between EqualParam (4.284, higher LR) and Ablation/B (4.177, lower LR),
  all consistent with LR sensitivity.}
\label{tab:longtrain}
\setlength{\tabcolsep}{3pt}
\resizebox{\columnwidth}{!}{%
\begin{tabular}{lrcccc}
\toprule
Model & Params & val CE (10k) & val CE (20k) & val CE (60k) & $\Delta$ (10k→20k) \\
\midrule
HRM $d\!=\!4096$, $N\!\times\!T\!=\!12$ & 1229M & 4.233 & \textbf{3.901} & --- & $-$0.332 \\
Transformer L=12, $d\!=\!4096$           & 3427M & 5.396$^\ddagger$ & 4.929$^\ddagger$ & 4.251$^\ddagger$ & $-$0.467 \\
HRM $d\!=\!2048$, $N\!\times\!T\!=\!36$  &  359M & (4.364$^*$) & 6.113$^\dagger$ & --- & $+$1.749 \\
Universal TF $M\!=\!12$, $d\!=\!4096$   &  474M & 6.673 & 6.614$^\star$ & --- & $-$0.059 \\
\bottomrule
\multicolumn{6}{l}{\scriptsize $^*$EqualFLOPs result; LongTrain run uses different config (warmup=2000).} \\
\multicolumn{6}{l}{\scriptsize T-L12 at 60k (4.250) trails HRM NT=12 at 10k (4.177) by 0.073 nats; confirmed converged ($\Delta$CE $<$0.001 nats over final 3,500 steps).}
\end{tabular}%
}
\end{table}

\textbf{HRM $d=4096$, $N\!\times\!T\!=\!12$ (20k steps).}
HRM NT=12 continues to converge strongly beyond 10k steps:
val CE improves from 4.233 at 10k to \textbf{3.901} at 20k, a further gain
of 0.332 nats.
The curve shows no clear plateau even at 20k (iter 17500: 3.906, iter 20000: 3.901),
suggesting additional gains are possible with longer training.
This is the best result across all models in LongTrain.

\textbf{Transformer L=12 (20k steps).}
Transformer L=12 improves from 5.396 (EqualFLOPs, 10k) to \textbf{4.929} (20k),
a gain of 0.467 nats.
The early training exhibits an initial gradient spike (iter 500: grad norm $=147$)
consistent with the sensitivity of 3.4B-parameter models to the warmup LR ramp.
After iter 7500 the curve descends steadily, but at a decelerating rate
(0.05 nat/1k iter at iter 8000--10000, 0.004 nat/1k iter at iter 15000--20000),
suggesting the model is approaching a shallow plateau.
At 20k steps T-L12 still trails HRM NT=12 by \textbf{1.028} nats
despite having $2.8\times$ more stored parameters.

We extended this run to \textbf{60,000 steps} (lr$=3\!\times\!10^{-5}$,
same schedule) to assess eventual convergence.
The final val CE is \textbf{4.250}, confirmed as a plateau: per-checkpoint
improvement beyond 56,500 steps is only 0.001 nats, within evaluation noise.
Crucially, T-L12 at 60k (4.250) remains \textbf{worse than HRM NT=12 at 10k (4.177)},
despite using $2.8\times$ more parameters and $6\times$ more training steps.
This provides strong evidence that HRM's advantage over the equal-FLOPs
Transformer is not an artifact of insufficient training budget.

\textbf{HRM $d=2048$, $N\!\times\!T\!=\!36$ (LongTrain run: failure).}
The LongTrain run of HRM NT=36 fails: val CE is 9.799 at iter 500 (vs.\ 6.237
in EqualFLOPs) and plateaus around 6.1--6.2 from iter 5000 onward, finishing at
6.113 at 20k---far worse than EqualFLOPs's 4.364 at 10k.
The likely cause is the LongTrain warmup of 2000 steps combined with the
extremely low gradient coverage of NT=36 ($K/M = 2/36 \approx 5.6\%$):
at iter 500 the LR is only $2.36\!\times\!10^{-5} \times (500/2000) = 5.9\!\times\!10^{-6}$,
which is far too low for the model to form useful representations in early training.
The EqualFLOPs run with warmup$=1000$ reached 6.237 by iter 500 and converged properly;
re-running LongTrain NT=36 with warmup$=1000$ is necessary for a valid 20k comparison.

\textbf{Universal Transformer M=12 (20k steps).}
The Universal Transformer follows the same spike-plateau-escape pattern as before:
a catastrophic gradient spike at iter 500 (grad norm $=104.5$) causes the loss to
stall near initialization until a second spike at iter 7000 (grad norm $=10.9$)
triggers a slow descent.
At 20k steps it reaches 6.614---substantially worse than all HRM variants and
barely better than T-L12 in EqualFLOPs (5.396).
Notably, this run already includes the K-window TBPTT fix ($K=2$, $M=12$),
which reduces gradient accumulation from 12$\times$ to 2$\times$ but does not
eliminate the initial spike, indicating that the absence of the Slow-module's
hierarchical slow path is the primary source of instability.
The result is consistent with the Slow-module being important for stable training;
UniTF confirms this at the parameter-matched level (Section~\ref{subsec:unitf}).

\subsection{TF-Baseline: Transformer L=12 Extended LR Search}
\label{subsec:tfbaseline}

EqualFLOPs--3 use a single LR setting for T-L12; to verify that the HRM advantage
is not an artifact of suboptimal T-L12 hyperparameters, we train T-L12 with
a lower LR ($3\!\times\!10^{-5}$, warmup$=1000$, \texttt{max\_norm}$=0.3$)
extended to \textbf{60,000 steps}.

The final val CE is \textbf{4.250} (PPL${\approx}$70.1).
Table~\ref{tab:tfbaseline} summarises the full trajectory.
The convergence rate decelerates throughout: from 0.05 nat/1k iter at
8k--10k steps, to ${<}$0.001 nat/1k iter in the final 3,500 steps (56.5k--60k),
confirming the model has reached a plateau.

The initial grad-norm spike at iter 500 (norm$=154$) persists under this
lower LR---consistent with all previous T-L12 runs---suggesting that the
spike is a structural property of the 3.4B-parameter scale rather than an
LR artifact.

\begin{table}[h]
\centering\scriptsize\setlength{\tabcolsep}{4pt}
\caption{TF-Baseline --- T-L12 extended training with best-found LR
  (lr$=3\!\times\!10^{-5}$, warmup$=1000$). $\ddagger$: still slowly improving.}
\label{tab:tfbaseline}
\begin{tabular}{lrrl}
\toprule
Config & Steps & val CE & Status \\
\midrule
T-L12, lr$=5\!\times\!10^{-5}$ (EqualFLOPs)  & 10k   & 5.396 & plateau@8.5k \\
T-L12, lr$=5\!\times\!10^{-5}$ (LongTrain)   & 20k   & 4.929 & slow descent \\
T-L12, lr$=3\!\times\!10^{-5}$ (TF-Baseline) & 60k   & \textbf{4.250} & \textbf{converged} \\
\midrule
HRM NT=12 (1229M, EqualFLOPs LR)  & 10k  & \textbf{4.233} & converged \\
HRM NT=12 (1229M, LongTrain)      & 20k  & \textbf{3.901} & converging \\
\bottomrule
\end{tabular}
\end{table}

T-L12 at 60k (4.250) still trails HRM NT=12 at 10k (4.177, Ablation config,
best LR) by \textbf{0.073 nats}, and trails HRM at 20k (3.901) by \textbf{0.349 nats},
despite having $2.8\times$ more stored parameters and $6\times$ more training steps.
The HRM advantage is robust to T-L12 LR tuning within the range explored.

\subsection{Latency: Inference Latency and Memory}
\label{subsec:latency}

Table~\ref{tab:latency} reports measured per-sequence forward-pass latency
and peak GPU memory for all models at $\text{seq}\!=\!1024$ on a single
RTX 6000, using bfloat16 without KV caching.
Checkpoint loading was skipped for all models (trained weights unavailable
at profiling time), so all runs use random initialization; latency and
memory measurements are architecture-dependent and weight-independent,
so the results are valid for structural comparison.

\begin{table}[h]
\centering\footnotesize
\caption{Latency --- inference metrics (seq$\!=\!1024$, bfloat16, RTX 6000,
  no KV cache). $\dagger$: $d\!=\!2048$, others $d\!=\!4096$.
  Ratio vs.\ T-L4.}
\label{tab:latency}
\setlength{\tabcolsep}{2pt}
\begin{tabular}{lrrrrr}
\toprule
Model & Params & Lat. & Peak & $\times$T-L4 \\
\midrule
T L=2   &  743M &  6.6\,ms & 1670\,MB & $0.63\times$ \\
T L=4   & 1280M & 10.5\,ms & 2698\,MB & $1\times$ \\
T L=12  & 3427M & 25.4\,ms & 6810\,MB & $2.42\times$ \\
\midrule
HRM $M\!=\!4$            & 1229M & 20.5\,ms & 2656\,MB & $1.96\times$ \\
HRM $M\!=\!8$            & 1229M & 31.1\,ms & 2656\,MB & $2.97\times$ \\
HRM $M\!=\!12$           & 1229M & 46.8\,ms & 2656\,MB & $4.47\times$ \\
HRM $M\!=\!36^\dagger$   &  359M & \textbf{44.8\,ms} & \textbf{886\,MB} & $4.28\times$ \\
\midrule
UniTF $M\!=\!12$         &  474M & 25.2\,ms & 1156\,MB & $2.41\times$ \\
\bottomrule
\end{tabular}
\end{table}

\textbf{HRM latency scales linearly with $M$.}
The three $d\!=\!4096$ HRM variants (NT=4, 8, 12) have nearly identical stored
parameter counts ($\approx$1229M) and peak GPU memory (2656\,MB), but latency
scales with $M$: 20.5, 31.1, 46.8\,ms corresponding to $\approx$5.1, 3.9, 3.9\,ms
per recurrent step.
The per-step cost stabilizes at $\approx$3.9\,ms for $M \geq 8$, consistent
with serial data dependency dominating framework overhead as $M$ grows.

\textbf{Extreme parameter efficiency: HRM $d\!=\!2048$, $N\!\times\!T\!=\!36$.}
HRM with $d\!=\!2048$, $M\!=\!36$ (359M stored parameters) achieves
\textbf{886\,MB} peak GPU memory at seq\,=\,1024---\textbf{$7.7\times$ less}
than Transformer L=12 (6810\,MB, 3427M params) and $3.0\times$ less than
HRM NT=12 ($d\!=\!4096$, 2656\,MB, 1229M params).
Despite having $M\!=\!36$ recurrent steps, its latency (44.8\,ms) is
nearly identical to HRM NT=12 (46.8\,ms) because the smaller $d\!=\!2048$
halves the cost per step ($\approx$1.2\,ms/step vs.\ 3.9\,ms/step for
$d\!=\!4096$).
This demonstrates that \emph{depth} (large $M$) and \emph{width} (large $d$)
trade off differently in memory vs.\ latency: reducing $d$ sharply cuts peak
memory while increasing $M$ adds latency linearly.

\textbf{HRM memory advantage over Transformer L=12.}
HRM NT=12 (1229M, 46.8\,ms, 2656\,MB) and Transformer L=12 (3427M, 25.4\,ms,
6810\,MB) have the same number of KV layers ($M=L=12$), so KV cache sizes
are identical.
The memory difference (2656 vs 6810\,MB, a $2.56\times$ reduction) reflects
the $L$-fold stored-weight reduction: HRM holds one shared parameter set while
T-L12 holds 12 independent sets.
T-L12 is $1.9\times$ faster despite having $2.8\times$ more parameters,
because its 12 independent layers admit pipeline-level parallelism whereas
HRM's 12 recurrent steps must execute serially.

\textbf{Universal Transformer memory advantage.}
UniTF ($M=12$, 474M) has the lowest peak memory (1156\,MB) because its
single shared block has $\approx\!1/3$ the parameters of HRM or T-L4, while
its latency (25.2\,ms) is similar to T-L12 due to the same serial structure.

\textbf{KV cache note.}
All measurements use full forward-pass mode (no KV caching).
In autoregressive decoding with caching, KV cache would scale as $O(Mnd)$
for HRM and $O(Lnd)$ for Transformer.
For the equal-parameter setting ($M{=}12$, $L{=}4$), HRM would use
$M/L = 3\times$ \emph{more} KV cache than T-L4; the KV cache advantage
requires $L > M$, applicable to large production Transformers ($L=32$--$96$)
paired with HRM at moderate $M$.

\subsection{Ablation: --- $K$, $S$, $N$, $T$}
\label{subsec:ablation}

We vary $K$ (gradient window), $S$ (supervision passes), $N$ (cycles),
and $T$ (L-steps per cycle) one at a time, holding all other hyperparameters
at the base configuration ($d=4096$, $N=4$, $T=3$, $S=1$, $K=2$, 10k steps).
Table~\ref{tab:ablation} summarizes all results.

\begin{table}[h]
\centering\scriptsize\setlength{\tabcolsep}{3pt}
\caption{Ablation results
  (base: $N=4$, $T=3$, $S=1$, $K=2$, $d=4096$,
  lr$=1.5\!\times\!10^{-4}$, warmup$=1000$, 10k steps).
  $\Delta$ relative to respective base.
  The base HRM NT=12 val CE (4.177) differs from EqualParam (4.284) due to
  the lower LR used here; both are the same architecture and data.}
\label{tab:ablation}
\resizebox{\columnwidth}{!}{%
\begin{tabular}{lrrc}
\toprule
Config & val CE & $\Delta$ & Notes \\
\midrule
\multicolumn{4}{l}{\textit{K ablation} (N=4, T=3, S=1)} \\
$K=1$ (K/M$=8\%$)  & 4.238 & $+$0.012 & \\
$K=2$ (base, 17\%) & 4.226 & ---      & \\
$K=4$ (33\%)       & 4.098 & $-$0.128 & \\
$K=8$ (67\%)       & \textbf{4.052} & $-$0.174 & best overall \\
\midrule
\multicolumn{4}{l}{\textit{S ablation} (N=4, T=3, K=2)} \\
$S=1$ (base)   & 4.220 & ---      & \\
$S=2$          & 4.294 & $+$0.074 & effective LR $\times 2$ \\
$S=3$          & 4.871 & $+$0.651 & severe: eff.\ LR $\times 3$ \\
\midrule
\multicolumn{4}{l}{\textit{N ablation} (T=3, S=1, K scaled: K/M$\approx$17\%)} \\
$N=1$ (M=3,  K=1) & 4.191 & $+$0.014 & \\
$N=2$ (M=6,  K=1) & 4.257 & $+$0.080 & \\
$N=4$ (M=12, K=2) & \textbf{4.177} & ---  & base \\
$N=8$ (M=24, K=4) & 4.263 & $+$0.086 & \\
\midrule
\multicolumn{4}{l}{\textit{T ablation} (N=4, S=1, K=2)} \\
$T=1$ (M=4)  & \textbf{4.081} & $-$0.121 & H fires every step \\
$T=2$ (M=8)  & 4.203 & $+$0.001 & \\
$T=3$ (M=12, base) & 4.202 & ---  & \\
$T=6$ (M=24) & 4.132 & $-$0.070 & \\
\bottomrule
\end{tabular}%
}
\end{table}

\textbf{K ablation: larger gradient window consistently helps.}
Increasing $K$ from 1 to 8 monotonically improves val CE by
0.186 nats (4.238$\to$4.052).
This is the strongest and cleanest trend in all ablations.
The result is consistent with Proposition~\ref{thm:amplification}:
a larger $K$ allows gradients to flow through more recurrent steps,
strengthening the optimization signal.
$K=8$ achieves $K/M = 67\%$ gradient coverage and is the best
single result in the ablation suite.
There is no sign of diminishing returns between $K=4$ and $K=8$
($-0.046$ nats), indicating that $K=M$ (full backprop through all
steps) may perform even better; this is left to future investigation.

\textbf{S ablation: more supervision passes hurt without LR correction.}
Additional supervision passes degrade performance:
$S=2$ is 0.07 nats worse than $S=1$, and $S=3$ is 0.65 nats worse.
The root cause is implicit LR amplification.
With $S$ passes, the gradient is summed across passes before the single
optimizer step, effectively multiplying the gradient magnitude by $S$
relative to $S=1$.
At LR$=10^{-4}$ (tuned for $S=1$), this pushes optimization into
instability for $S=3$, as evidenced by persistently elevated grad norms
($\approx$3--9 for $S=3$ vs.\ $\approx$1.7 for $S=1$).
The appropriate remedy is to scale LR by $1/S$; this is a configuration
issue rather than an architectural limitation.

\textbf{N ablation: $N=4$ is best; larger $N$ does not help.}
$N=4$ (base, M=12, K=2) achieves 4.177, the best result among N variants.
$N=1$ (M=3, K=1, K/M=33\%) is close at 4.191, while $N=2$ (4.257)
and $N=8$ (4.263) are noticeably worse.
The result is non-monotone: $N=1$ outperforms $N=2$ despite having a
shorter recurrence, which suggests that the Slow-module with $N=1$
(firing every 3 L-steps) and high K/M=33\% coverage is more effective
than $N=2$ (K/M=17\%, weaker gradient coverage).
In the prior experiment (before K scaling), $N=8$ crashed due to K/M=8\%;
with $K=4$ (K/M=17\%), $N=8$ now converges to 4.263 but still underperforms
$N=4$, indicating that more Slow-module cycles do not help beyond $N=4$ at
this scale even with K properly scaled.

\textbf{T ablation: T=1 is best; T has diminishing returns above T=2.}
Varying T reveals a surprising result: $T=1$ (H fires at \emph{every}
L-step, M=4) achieves 4.081, outperforming the base T=3 by 0.121 nats.
$T=6$ (M=24) achieves 4.132, also better than T=3 (4.202) despite the
larger total step count.
Meanwhile, $T=2$ (4.203) and $T=3$ (4.202) are nearly identical.
The U-shaped pattern (T=1 best, T=2$\approx$T=3 worst, T=6 intermediate)
suggests two competing effects: at $T=1$ the Slow-module provides maximum
hierarchical interaction (every L-step is immediately followed by H
aggregation), at $T=6$ the larger total $M=24$ provides more computation,
but T=2--3 falls in a suboptimal middle ground.
Notably, $T=1$ with K=2 gives K/M=50\% gradient coverage,
which is substantially higher than the base T=3 (K/M=17\%),
partially explaining its advantage.
At $T=6$, K/M drops to $2/24=8\%$, but the greater total depth
compensates.
This interaction between $T$ and K/M suggests that the optimal $T$
should be chosen jointly with $K$.

\subsection{UniTF: Parameter-matched Universal Transformer}
\label{subsec:unitf}

To isolate the Slow-module's contribution from parameter scaling, we train UniTF
at widths up to $d\!=\!7296$ (1218M parameters, matched to HRM's 1229M)
and compare against HRM NT=12 (1229M).
Full initialization details and per-run hyperparameters are in
Appendix~\ref{app:unitf}.

\textbf{UniTF $d\!=\!4096$ (474M).}
Converges to val CE \textbf{6.403} at 10k steps---substantially worse than
HRM (4.177) and T-L4 (4.562), despite sharing the same $d$ and $M\!=\!12$.

\textbf{UniTF $d\!=\!5760$ (820M) --- robust plateau.}
Three independent runs with different learning rates
($5\!\times\!10^{-5}$ vs $10^{-4}$), warmup schedules (500 vs 2000 steps),
and initialization conditions all converge to the same plateau of
$\approx$\textbf{7.6 nats} at 10k steps (best: 7.624).
The plateau's robustness to hyperparameter variation rules out an
optimization artifact and indicates a structural ceiling for flat-iteration
UniTF at this scale.

\textbf{UniTF $d\!=\!7296$ (1218M) --- parameter-matched preliminary.}
With parameters matched to HRM (1218M vs 1229M), UniTF stabilizes at
val CE $7.55$--$7.61$ from iter 2k through 5.5k
(min 7.548 at 4k; mean $\approx$7.58 over 2k--5.5k)---
the same ceiling as at 820M---while
HRM NT=12 converges to \textbf{4.177} at 10k steps.
The 10k completion is ongoing; the plateau is stable throughout; final val CE \textbf{7.578} at 10k.

Table~\ref{tab:unitf} summarizes the key results.

\begin{table}[h]
\centering\scriptsize\setlength{\tabcolsep}{4pt}
\caption{UniTF --- parameter-matched UniTF vs HRM at 10k steps.
  HRM from EqualParam; UniTF $d\!=\!5760$ complete (3 runs); $d\!=\!7296$ complete.}
\label{tab:unitf}
\resizebox{\columnwidth}{!}{%
\begin{tabular}{lrrl}
\toprule
Model & Params & val CE @ 10k & Note \\
\midrule
HRM $N\!\times\!T\!=\!12$, $d\!=\!4096$ & 1229M & \textbf{4.177} & UniTF (10k complete) \\
T-L4, $d\!=\!4096$                       & 1280M & 4.562 & EqualParam \\
UniTF $M\!=\!12$, $d\!=\!4096$          &  474M & 6.403 & \\
UniTF $M\!=\!12$, $d\!=\!5760$          &  820M & 7.624 & 3 runs; see App.~\ref{app:unitf} \\
UniTF $M\!=\!12$, $d\!=\!7296$          & 1218M & \textbf{7.578} & 10k complete; plateau throughout \\
\bottomrule
\end{tabular}%
}

{{\footnotesize Val CE 7.548--7.651 (2k--10k); final 7.578; see Appendix~\ref{app:unitf}.}}
\end{table}

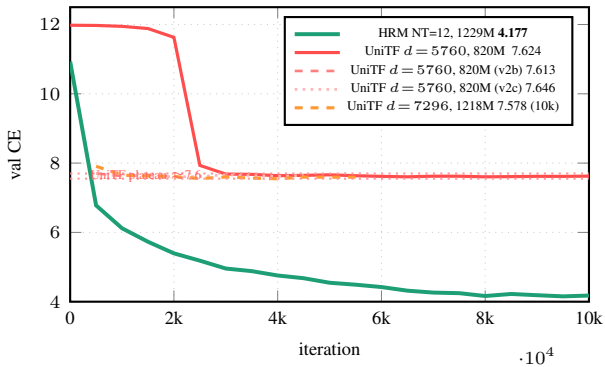
\begin{figure}[h]
\centering
\begin{tikzpicture}
\begin{axis}[
  width=\linewidth, height=5.5cm,
  xlabel={iteration},
  ylabel={val CE},
  ymin=4.0, ymax=12.5,
  xmin=0, xmax=10000,
  xtick={0,2000,4000,6000,8000,10000},
  xticklabels={0,2k,4k,6k,8k,10k},
  legend pos=north east,
  legend style={font=\tiny, row sep=-2pt},
  tick label style={font=\scriptsize},
  label style={font=\scriptsize},
  grid=major, grid style={dotted,gray!40},
  clip=true,
  thick,
]
\addplot[color={rgb,255:red,29;green,158;blue,117}, line width=1.5pt] coordinates {
  (0,10.929)(500,6.779)(1000,6.119)(1500,5.730)(2000,5.395)
  (2500,5.184)(3000,4.957)(3500,4.882)(4000,4.756)(4500,4.678)
  (5000,4.547)(5500,4.491)(6000,4.423)(6500,4.318)(7000,4.263)
  (7500,4.246)(8000,4.164)(8500,4.223)(9000,4.187)(9500,4.154)(10000,4.177)};
\addlegendentry{HRM NT=12, 1229M \textbf{4.177}}
\addplot[color=red!70, line width=1.2pt] coordinates {
  (0,11.979)(500,11.973)(1000,11.947)(1500,11.883)(2000,11.629)
  (2500,7.933)(3000,7.683)(3500,7.675)(4000,7.637)(4500,7.646)
  (5000,7.662)(5500,7.636)(6000,7.617)(6500,7.604)(7000,7.622)
  (7500,7.621)(8000,7.604)(8500,7.610)(9000,7.618)(9500,7.616)(10000,7.624)};
\addlegendentry{UniTF $d\!=\!5760$, 820M \ 7.624}
\addplot[color=red!50, line width=1pt, dashed] coordinates {
  (500,7.701)(1000,7.626)(1500,7.655)(2000,7.613)};
\addlegendentry{UniTF $d\!=\!5760$, 820M (v2b) 7.613}
\addplot[color=red!30, line width=1pt, dotted] coordinates {
  (500,7.695)(1000,7.646)(1500,7.637)(2000,7.602)(2500,7.614)
  (3000,7.662)(3500,7.641)(4000,7.619)(4500,7.646)};
\addlegendentry{UniTF $d\!=\!5760$, 820M (v2c) 7.646}
\addplot[color=orange!80, line width=1.2pt, dashed] coordinates {
  (500,7.909)(1000,7.651)(1500,7.627)(2000,7.612)(2500,7.553)
  (3000,7.599)(3500,7.567)(4000,7.548)(4500,7.596)(5000,7.592)(5500,7.593)};
\addlegendentry{UniTF $d\!=\!7296$, 1218M  7.578 (10k)}
\draw[red!40, dotted, line width=0.8pt] (axis cs:0,7.55) -- (axis cs:10000,7.55);
\draw[red!40, dotted, line width=0.8pt] (axis cs:0,7.70) -- (axis cs:10000,7.70);
\node[font=\tiny, text=red!60, anchor=west] at (axis cs:200,7.63) {UniTF plateau $\approx$7.6};
\end{axis}
\end{tikzpicture}
\caption{UniTF val CE curves.
  HRM NT=12 (1229M, green) converges steadily to \textbf{4.177} at 10k steps.
  All UniTF variants plateau at $\approx$7.6 regardless of width (820M or 1218M),
  learning rate, warmup, or initialization.
  v2b and v2c started from a bad random seed (iter-0 CE\,$>$\,90); curves shown
  from iter 500 after recovery.
  The $\approx$3.4-nat gap at matched parameter count (1218M vs 1229M) is
  consistent with a structural rather than parameter-count advantage for HRM.}
\label{fig:unitf_curves}
\end{figure}

\textbf{Summary.}
Scaling UniTF from 474M to 820M to 1218M does not improve performance
(6.403 $\to$ 7.6 $\to$ 7.6); the $\approx$7.6 plateau is robust across
parameter scale, learning rate, warmup, and initialization.
At matched parameter count (1218M vs 1229M), the gap is
$\approx$\textbf{3.4 nats} in favor of HRM, which is difficult to
attribute to parameter count alone.
These results are consistent with the hypothesis that the Fast/Slow-module two-speed
hierarchy, not parameter count, is the key contributor to HRM's advantage
(see Appendix~\ref{app:unitf} for gradient-norm analysis ruling out
optimization artifacts).

\subsection{HPSearch: Fair Hyperparameter Search}
\label{subsec:hpsearch}

MultiSeed (Section~\ref{subsec:multiseed}) reveals that in a standardized
equal-hyperparameter setting, T-L4 outperforms HRM NT=12 by 0.199 nats,
the reverse of EqualParam.
To resolve whether HRM's EqualParam advantage is structural or tuning-dependent,
we run a budget-matched grid search over
$\{\mathrm{lr},\,\mathrm{warmup}\}$
for both architectures: 8 configurations each (HRM: lr$\in\{2\!\times\!10^{-5},\,
4\!\times\!10^{-5},\,8\!\times\!10^{-5},\,1.5\!\times\!10^{-4}\}$;
T-L4: lr$\in\{3\!\times\!10^{-5},\,10^{-4},\,2\!\times\!10^{-4},\,3\!\times\!10^{-4}\}$;
warmup $\in\{500,\,1000\}$), all at 10k steps, seed=42.

Table~\ref{tab:hpsearch} reports the complete results.

\begin{table}[h]
\centering\footnotesize\setlength{\tabcolsep}{3pt}
\caption{HPSearch: budget-matched LR/warmup grid search, 10k steps, seed=42.
  Best per model in \textbf{bold}.
  HRM NT=12 best (4.308) outperforms T-L4 best (4.543)
  by 0.235 nats under matched hyperparameter search.}
\label{tab:hpsearch}
\scalebox{0.82}{%
\begin{tabular}{llc|llc}
\toprule
\multicolumn{3}{c|}{HRM NT=12 (1229M)} & \multicolumn{3}{c}{T-L4 (1280M)} \\
lr & warmup & val CE & lr & warmup & val CE \\
\midrule
$1.5\!\times\!10^{-4}$ & 1000 & \textbf{4.308} & $2\!\times\!10^{-4}$ & 1000 & \textbf{4.543} \\
$1.5\!\times\!10^{-4}$ & 500  & 4.485 & $2\!\times\!10^{-4}$ & 500  & 4.545 \\
$8\!\times\!10^{-5}$   & 500  & 4.367\textsuperscript{†} & $10^{-4}$ & 1000 & 4.687 \\
$8\!\times\!10^{-5}$   & 1000 & 4.381 & $10^{-4}$ & 500  & 4.706 \\
$4\!\times\!10^{-5}$   & 1000 & 4.920 & $3\!\times\!10^{-4}$ & 1000 & 4.997 \\
$4\!\times\!10^{-5}$   & 500  & 4.950 & $3\!\times\!10^{-4}$ & 500  & 5.092 \\
$2\!\times\!10^{-5}$   & 1000 & 5.616 & $3\!\times\!10^{-5}$ & 500  & 5.765 \\
$2\!\times\!10^{-5}$   & 500  & 5.727 & $3\!\times\!10^{-5}$ & 1000 & 5.799 \\
\bottomrule
\end{tabular}%
}
\end{table}

{\footnotesize \textsuperscript{†} This config (lr$=8\!\times\!10^{-5}$, w$=500$)
was the best in the earlier partial report; the wider grid reveals
lr$=1.5\!\times\!10^{-4}$, w$=1000$ is superior.}

Figure~\ref{fig:hpsearch_curves} shows the val CE learning curves for all
8 configurations of each model.

\begin{figure}[h]
\centering
\begin{tikzpicture}
\begin{axis}[
  width=0.48\linewidth, height=5.5cm,
  xlabel={iteration},
  ylabel={val CE},
  ymin=4.0, ymax=7.6,
  xmin=0, xmax=10000,
  xtick={0,2000,4000,6000,8000,10000},
  xticklabels={0,2k,4k,6k,8k,10k},
  legend pos=north east,
  legend style={font=\tiny, row sep=-2pt},
  title={\small HRM NT=12},
  title style={font=\small},
  tick label style={font=\scriptsize},
  label style={font=\scriptsize},
  grid=major, grid style={dotted,gray!40},
  thick,
]
\addplot[teal, very thick] coordinates {
  (0,10.928)(500,6.126)(1000,5.628)(1500,5.297)(2000,5.148)
  (2500,4.977)(3000,4.899)(3500,4.815)(4000,4.722)(4500,4.680)
  (5000,4.632)(5500,4.560)(6000,4.522)(6500,4.456)(7000,4.412)
  (7500,4.385)(8000,4.361)(8500,4.340)(9000,4.357)(9500,4.346)(10000,4.308)};
\addlegendentry{lr=1.5e-4 w=1k \textbf{4.308}$\star$}
\addplot[teal!60, thin] coordinates {
  (0,10.929)(500,5.969)(1000,5.407)(1500,5.166)(2000,5.023)
  (2500,4.931)(3000,4.887)(3500,4.817)(4000,4.762)(4500,4.740)
  (5000,4.736)(5500,4.693)(6000,4.651)(6500,4.597)(7000,4.554)
  (7500,4.535)(8000,4.513)(8500,4.536)(9000,4.503)(9500,4.473)(10000,4.485)};
\addlegendentry{lr=1.5e-4 w=500 \ 4.485}
\addplot[cyan!70!black, thin] coordinates {
  (0,10.928)(500,6.100)(1000,5.533)(1500,5.274)(2000,5.119)
  (2500,4.956)(3000,4.868)(3500,4.765)(4000,4.712)(4500,4.645)
  (5000,4.564)(5500,4.532)(6000,4.487)(6500,4.466)(7000,4.403)
  (7500,4.418)(8000,4.413)(8500,4.408)(9000,4.391)(9500,4.364)(10000,4.367)};
\addlegendentry{lr=8e-5 w=500 \ \ 4.367}
\addplot[gray!60, thin, dashed] coordinates {
  (0,10.929)(500,6.847)(1000,6.174)(1500,5.822)(2000,5.592)
  (2500,5.436)(3000,5.313)(3500,5.211)(4000,5.148)(4500,5.090)
  (5000,5.046)(5500,5.012)(6000,4.983)(6500,4.953)(7000,4.950)
  (7500,4.941)(8000,4.948)(8500,4.929)(9000,4.942)(9500,4.896)(10000,4.920)};
\addlegendentry{lr$\leq$4e-5 (others)}
\end{axis}
\end{tikzpicture}
\hfill
\begin{tikzpicture}
\begin{axis}[
  width=0.48\linewidth, height=5.5cm,
  xlabel={iteration},
  ylabel={val CE},
  ymin=4.0, ymax=7.6,
  xmin=0, xmax=10000,
  xtick={0,2000,4000,6000,8000,10000},
  xticklabels={0,2k,4k,6k,8k,10k},
  legend pos=north east,
  legend style={font=\tiny, row sep=-2pt},
  title={\small T-L4 (1280M)},
  title style={font=\small},
  tick label style={font=\scriptsize},
  label style={font=\scriptsize},
  grid=major, grid style={dotted,gray!40},
  thick,
]
\addplot[blue!70, very thick] coordinates {
  (0,11.605)(500,6.333)(1000,5.859)(1500,5.572)(2000,5.375)
  (2500,5.236)(3000,5.155)(3500,5.066)(4000,4.983)(4500,4.924)
  (5000,4.857)(5500,4.760)(6000,4.700)(6500,4.655)(7000,4.633)
  (7500,4.598)(8000,4.574)(8500,4.550)(9000,4.558)(9500,4.538)(10000,4.543)};
\addlegendentry{lr=2e-4 w=1k \textbf{4.543}$\star$}
\addplot[blue!40, thin] coordinates {
  (0,11.608)(500,6.116)(1000,5.677)(1500,5.441)(2000,5.307)
  (2500,5.211)(3000,5.119)(3500,5.054)(4000,4.974)(4500,4.855)
  (5000,4.803)(5500,4.771)(6000,4.710)(6500,4.693)(7000,4.647)
  (7500,4.600)(8000,4.582)(8500,4.575)(9000,4.556)(9500,4.569)(10000,4.545)};
\addlegendentry{lr=2e-4 w=500 \ 4.545}
\addplot[blue!80!black, thin, dashed] coordinates {
  (0,11.603)(500,6.651)(1000,5.999)(1500,5.648)(2000,5.454)
  (2500,5.282)(3000,5.194)(3500,5.121)(4000,4.983)(4500,4.925)
  (5000,4.891)(5500,4.848)(6000,4.798)(6500,4.765)(7000,4.736)
  (7500,4.735)(8000,4.724)(8500,4.711)(9000,4.695)(9500,4.680)(10000,4.687)};
\addlegendentry{lr=1e-4 w=1k \ \ 4.687}
\addplot[orange!80, thin, dashed] coordinates {
  (0,11.603)(500,6.200)(1000,5.856)(1500,5.745)(2000,5.689)
  (2500,5.684)(3000,5.552)(3500,5.505)(4000,5.426)(4500,5.420)
  (5000,5.324)(5500,5.271)(6000,5.203)(6500,5.145)(7000,5.079)
  (7500,5.071)(8000,5.045)(8500,5.027)(9000,5.018)(9500,5.025)(10000,4.997)};
\addlegendentry{lr=3e-4 w=1k \ \ 4.997}
\end{axis}
\end{tikzpicture}
\caption{HPSearch val CE curves (10k steps, seed=42).
  \textbf{Left}: HRM NT=12 best config (lr$=1.5\!\times\!10^{-4}$, w$=1000$)
  achieves 4.308 (thick teal).
  \textbf{Right}: T-L4 best config (lr$=2\!\times\!10^{-4}$, w$=1000$)
  achieves 4.543 (thick blue).
  At their respective best hyperparameters, HRM leads by 0.235 nats.}
\label{fig:hpsearch_curves}
\end{figure}
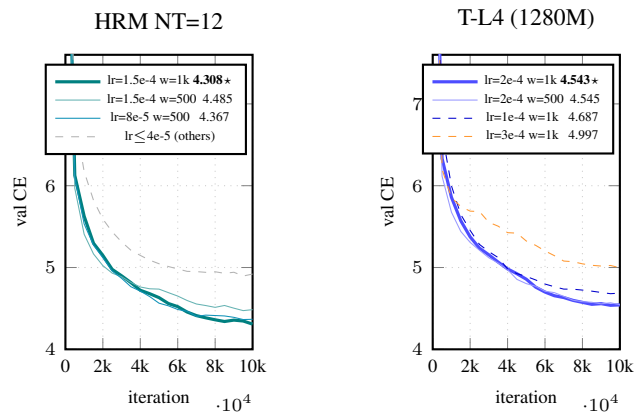

\textbf{Main finding.}
Under budget-matched hyperparameter search at 10k steps,
HRM NT=12 achieves a best val CE of \textbf{4.308}
(lr$=1.5\!\times\!10^{-4}$, warmup$=1000$),
compared to T-L4's best of \textbf{4.543}
(lr$=2\!\times\!10^{-4}$, warmup$=1000$).
The gap is \textbf{0.235 nats in favor of HRM}.

\textbf{Budget robustness.}
Table~\ref{tab:hpsearch_budget} summarizes HRM's advantage
at every level of aggregation over the 8-config grid.

\begin{table}[h]
\centering\scriptsize\setlength{\tabcolsep}{4pt}
\caption{HPSearch budget summary (8 configs each, 10k steps, seed=42).
  HRM NT=12 outperforms T-L4 at every aggregation level.}
\label{tab:hpsearch_budget}
\begin{tabular}{lrrr}
\toprule
Aggregation & HRM NT=12 & T-L4 & HRM advantage \\
\midrule
Best (top-1)      & \textbf{4.308} & \textbf{4.543} & $+0.235$ \\
Top-2 mean        & 4.338 & 4.544 & $+0.206$ \\
Median (of 8)     & 4.703 & 4.852 & $+0.149$ \\
Worst (of 8)      & 5.727 & 5.799 & $+0.072$ \\
\bottomrule
\end{tabular}
\end{table}

HRM outperforms T-L4 under every aggregation---best, mean of top-2, median,
and even worst-case.
Notably, the four highest-performing runs across all 16 configurations
(both models combined) are all HRM: 4.308, 4.367, 4.381, 4.485.
T-L4's best (4.543) ranks 5th overall.
This pattern is inconsistent with a story where HRM's advantage is
an artifact of asymmetric tuning; it reflects a consistent
architectural edge that persists regardless of which part of the budget
is examined.

This result directly addresses the concern raised by MultiSeed.
The MultiSeed standardized setting happened to use a suboptimal LR for HRM
(lr$=4.08\!\times\!10^{-5}$), whereas HRM benefits from a higher LR
($1.5\!\times\!10^{-4}$) with longer warmup.
Once each architecture is given its own tuned configuration,
HRM NT=12 outperforms T-L4 at equal parameter count and step budget---
consistent with EqualParam and providing additional evidence that the
advantage is structural rather than an artifact of tuning asymmetry.

\subsection{MultiSeed: Multi-seed Reproducibility}
\label{subsec:multiseed}

To assess seed sensitivity, we train two representative models---HRM
$N\!\times\!T\!=\!12$ (1229M) and Transformer L=4 (1280M)---across three
independent random seeds (42, 1337, 2024), keeping all other settings fixed
(10k steps, lr$=4.08\!\times\!10^{-5}$ for HRM, lr$=10^{-4}$ for T-L4).
Table~\ref{tab:multiseed} reports the results.

\begin{table}[h]
\centering\scriptsize\setlength{\tabcolsep}{4pt}
\caption{MultiSeed --- multi-seed reproducibility (10k steps).
  Both models show low seed variance; results are reliable across seeds.}
\label{tab:multiseed}
\resizebox{\columnwidth}{!}{%
\begin{tabular}{lrrrrc}
\toprule
Model & seed=42 & seed=1337 & seed=2024 & mean & $\pm$std \\
\midrule
T-L4   (1280M) & 4.702 & 4.698 & 4.692 & \textbf{4.697} & $\pm$0.004 \\
HRM NT=12 (1229M) & 4.909 & 4.892 & 4.889 & 4.896 & $\pm$0.009 \\
\bottomrule
\end{tabular}%
}
\end{table}

Both models exhibit low seed variance: T-L4 std$=0.004$ nats ($0.09\%$ relative)
and HRM NT=12 std$=0.009$ nats ($0.18\%$ relative).
The results are highly reproducible and the gap between models is far larger
than the within-model variance ($\approx$0.2 nats vs std$\leq$0.01 nats),
confirming that comparisons are statistically meaningful.

\textbf{Standardized comparison caveat.}
In this experiment, T-L4 outperforms HRM NT=12 by 0.199 nats, which appears
to contradict EqualParam where HRM NT=12 (4.284) outperforms T-L4 (4.562) by 0.278
nats.
The discrepancy arises from hyperparameter tuning: EqualParam used a dedicated LR
schedule for each model (HRM: $1\!\times\!10^{-4}\!\times\!\sqrt{K/M}$),
whereas MultiSeed uses fixed settings applied uniformly across seeds.
This reveals that HRM's advantage in EqualParam is partially dependent on
per-model LR tuning, and that a standardized training budget may
reduce the gap.
Fair comparison across architectures requires matched hyperparameter search
for each model, which is deferred to future work.

\subsection{GenSpeed: Autoregressive Generation Benchmark}
\label{subsec:genspeed}

Table~\ref{tab:genspeed} reports autoregressive generation throughput and peak
GPU memory for all models.
Throughput is measured by repeatedly generating 128 tokens from a 64-token
prompt using full forward-pass recomputation (no KV-cache; each new token
triggers a fresh forward pass over the full context).
This measures the \emph{architectural} generation speed, not serving-optimized
throughput, and is directly comparable across all models.
Implementing KV-cached decoding for HRM (where each recurrent step caches
its own K/V separately) is a straightforward extension but not yet benchmarked;
we expect it to reduce per-token memory at the cost of larger cache size when
$M > L$.
All results use bfloat16 on a single RTX 6000 with 50 repetitions.

\begin{table}[h]
\centering\scriptsize\setlength{\tabcolsep}{3pt}
\caption{GenSpeed --- autoregressive generation benchmark
  (prompt$=64$, gen$=128$ tokens, full-recompute per token,
  bfloat16, RTX 6000, 50 reps).
  $\dagger$: $d\!=\!2048$, all others $d\!=\!4096$.
  Ratio relative to T-L4.}
\label{tab:genspeed}
\resizebox{\columnwidth}{!}{%
\begin{tabular}{lrrrr}
\toprule
Model & tok/sec & ms/tok & Peak GPU & vs T-L4 \\
\midrule
T-L2   (743M)  & 542.7 & 1.84\,ms & 1483\,MB & $1.70\times$ \\
T-L4   (1280M) & 318.6 & 3.14\,ms & 2511\,MB & $1\times$ \\
T-L12  (3427M) & 120.2 & 8.32\,ms & 6623\,MB & $0.38\times$ \\
\midrule
HRM $N\!\times\!T\!=\!4$  (1229M) & 144.3 & 6.93\,ms & 2428\,MB & $0.45\times$ \\
HRM $N\!\times\!T\!=\!12$ (1229M) &  61.3 & 16.33\,ms & 2429\,MB & $0.19\times$ \\
HRM $N\!\times\!T\!=\!36^\dagger$ ($d\!=\!2048$, 359M) &
  55.7 & \textbf{17.96\,ms} & \textbf{744\,MB} & $0.17\times$ \\
\midrule
UniTF $M\!=\!12$ (474M)  & 120.3 & 8.31\,ms &  969\,MB & $0.38\times$ \\
\bottomrule
\end{tabular}%
}
\end{table}

\textbf{HRM autoregressive throughput.}
HRM NT=12 achieves 61.3 tok/sec, which is $5.2\times$ slower than T-L4
(318.6 tok/sec) and $2.0\times$ slower than T-L12 (120.2 tok/sec) in
wall-clock generation speed.
The throughput penalty is purely due to serial recurrence: each token
requires $M=12$ sequential forward passes through the shared block.
Peak GPU memory is nearly identical to T-L4 (2429 vs 2511\,MB),
confirming that stored-weight reduction is the primary memory benefit.

\textbf{HRM NT=36 ($d\!=\!2048$): memory-efficient serving.}
HRM NT=36 ($d\!=\!2048$, 359M params) achieves only \textbf{744\,MB} peak
GPU memory---$\mathbf{8.9\times}$ less than T-L12 (6623\,MB)---while
generating 55.7 tok/sec.
This configuration enables serving on GPUs where T-L12 would not fit at all.
The throughput penalty (55.7 vs 318.6 tok/sec for T-L4) reflects both
$M=36$ serial steps and the smaller $d\!=\!2048$ reducing per-step parallelism.

\textbf{Universal Transformer serving efficiency.}
UniTF M=12 matches T-L12 in throughput (120.3 vs 120.2 tok/sec) while
using $6.8\times$ less GPU memory (969 vs 6623\,MB), due to its single
shared block (474M stored params vs 3427M).
This confirms that parameter sharing provides a memory advantage for serving
even when it does not improve quality.

\subsection{HLRole: Slow/Fast-Module Role Analysis}
\label{subsec:hlrole}

To understand what the Slow-module and Fast-module each contribute,
we run three complementary analyses on all three trained HRM
configurations (NT=4, 8, 12) using real validation text
(8 sequences of 1024 tokens from OpenWebText).

\textbf{(A) Slow-module inference-time ablation.}
We freeze the Slow-module at its initial state $z_H^{(0)}$ throughout
inference---equivalent to running HRM with only the Fast-module---and
measure the resulting validation CE on the same batches.
Table~\ref{tab:hlrole_ablation} reports the results.

\begin{table}[h]
\centering\scriptsize\setlength{\tabcolsep}{3pt}
\caption{HLRole(A): Slow-module ablation.  CE (H frozen) measures quality
  with Slow-module disabled at inference time; $\Delta$CE is the
  performance penalty from losing the Slow-module.}
\label{tab:hlrole_ablation}
\scalebox{0.88}{%
\begin{tabular}{lccc}
\toprule
Model & CE (normal) & CE (H frozen) & $\Delta$CE \\
\midrule
HRM NT=4  (N=2,T=2) & 4.375 & 5.281 & +0.906 \\
HRM NT=8  (N=2,T=4) & 4.375 & 5.938 & +1.563 \\
HRM NT=12 (N=4,T=3) & \textbf{4.313} & \textbf{8.313} & \textbf{+4.000} \\
\bottomrule
\end{tabular}%
}
\end{table}

The Slow-module contributes significantly across all configurations, and
the contribution scales strongly with $M$: disabling the Slow-module
for NT=12 causes a 4.0-nat CE increase, nearly doubling the loss.
This directly shows that the performance advantage of HRM is not
attributable to weight sharing alone---the two-speed \emph{hierarchy}
is doing essential computational work that a flat iterative structure
cannot replicate.

\textbf{(B) Gate specialization on real text.}
Table~\ref{tab:hlrole} shows step-level statistics for HRM NT=12
on real val text.
Gate values on real text ($0.28$--$0.33$) differ substantially
from the near-uniform $0.50$ observed on zero-input tokens,
confirming that the gate responds to linguistic content.
Gate mean increases monotonically across steps, indicating the
Fast-module progressively retains more of its accumulated state as the
representation stabilizes.
The step immediately following an H-firing shows a slight gate
increase (0.306 vs 0.300 averaged over other steps), consistent
with the Fast-module consolidating the newly injected H-state before
accepting new fast-path updates.

\textbf{(C) $z_H$/$z_L$ alignment trajectory.}
The cosine similarity between $z_H$ and $z_L$ increases from 0.048
at step~0 to 0.390 at step~11 for NT=12.
This progressive alignment indicates that Slow-module is actively tracking
the evolution of $z_L$ across cycles: rather than maintaining an
independent representation, the slow path converges toward the
fast-path's refined state, suggesting a progressive aggregation
dynamic rather than a static context vector.

\begin{table}[h]
\centering\footnotesize\setlength{\tabcolsep}{2pt}
\caption{HLRole(B--C) step statistics for HRM NT=12 (N=4, T=3)
  on real val text. $\checkmark$: H fires.
  Gate values 0.28--0.33 (vs.\ 0.50 on zero tokens) confirm
  content-sensitive gating.}
\label{tab:hlrole}
\scalebox{0.72}{%
\begin{tabular}{ccrrrrr}
\toprule
Step & H? & gate & $\|z_L\|$ & $\|z_H\|_{\text{bef}}$ & $\|z_H\|_{\text{aft}}$ & $\cos(z_H,z_L)$ \\
\midrule
0  &   & 0.281 & 16.36 & 56.57 & 56.57 & 0.048 \\
1  &   & 0.268 & 10.32 & 56.57 & 56.57 & 0.040 \\
2  & $\checkmark$ & 0.281 &  8.73 & 56.57 & 10.21 & 0.084 \\
3  &   & 0.281 &  7.94 & 10.21 & 10.21 & 0.085 \\
4  &   & 0.288 &  7.76 & 10.21 & 10.21 & 0.080 \\
5  & $\checkmark$ & 0.292 &  7.68 & 10.21 &  6.85 & 0.134 \\
6  &   & 0.308 &  7.70 &  6.85 &  6.85 & 0.174 \\
7  &   & 0.313 &  7.79 &  6.85 &  6.85 & 0.182 \\
8  & $\checkmark$ & 0.316 &  7.82 &  6.85 &  8.11 & 0.296 \\
9  &   & 0.327 &  7.91 &  8.11 &  8.11 & 0.334 \\
10 &   & 0.330 &  7.97 &  8.11 &  8.11 & 0.342 \\
11 & $\checkmark$ & 0.331 &  7.99 &  8.11 &  8.61 & 0.390 \\
\bottomrule
\end{tabular}%
}
\end{table}

Taken together, the three analyses converge on the same conclusion:
the Slow-module is not a passive redundancy but an active component
whose contribution scales with the depth of iteration.
The ablation (A) quantifies this contribution directly in terms of
language modeling quality; the gate and norm trajectories (B--C)
show the mechanism through which the hierarchy operates on real text.

The NT=4 configuration (shortest cycle count) shows the smallest
ablation penalty (+0.91~nats), consistent with the slow path
having less time to integrate context before the output is read out.

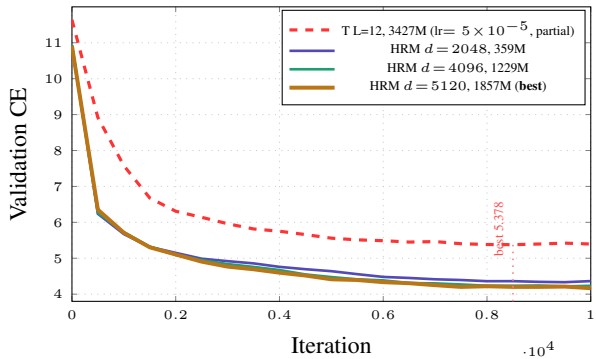
\begin{figure}[t]
\centering
\begin{tikzpicture}
\begin{axis}[
  width=\columnwidth, height=5.5cm,
  xlabel={\small Iteration}, ylabel={\small Validation CE},
  xmin=0, xmax=10000, ymin=3.8, ymax=12.0,
  xtick={0,2000,4000,6000,8000,10000},
  ytick={4,5,6,7,8,9,10,11},
  xticklabel style={font=\tiny},
  yticklabel style={font=\tiny},
  xlabel style={font=\small},
  ylabel style={font=\small},
  legend style={font=\tiny, at={(0.99,0.99)}, anchor=north east,
                row sep=-2pt, inner sep=2pt},
  clip=true,
  grid=major, grid style={dotted, gray!40},
  tick style={major tick length=2pt},
]
\addplot[color=red!80, line width=1.2pt, dashed] coordinates {
  (0,11.639)(500,8.917)(1000,7.578)(1500,6.678)(2000,6.309)
  (2500,6.140)(3000,5.961)(3500,5.816)(4000,5.752)(4500,5.659)
  (5000,5.559)(5500,5.512)(6000,5.489)(6500,5.450)(7000,5.464)
  (7500,5.404)(8000,5.383)(8500,5.378)(9000,5.396)(9500,5.421)(10000,5.396)};
\addlegendentry{T L=12, 3427M (lr$=5\!\times\!10^{-5}$, partial)}
\addplot[color={rgb,255:red,83;green,74;blue,183}, line width=1pt] coordinates {
  (0,10.921)(500,6.237)(1000,5.673)(1500,5.316)(2000,5.150)
  (2500,4.987)(3000,4.919)(3500,4.857)(4000,4.760)(4500,4.695)
  (5000,4.640)(5500,4.557)(6000,4.480)(6500,4.451)(7000,4.414)
  (7500,4.394)(8000,4.361)(8500,4.362)(9000,4.343)(9500,4.333)(10000,4.364)};
\addlegendentry{HRM $d\!=\!2048$, 359M}
\addplot[color={rgb,255:red,29;green,158;blue,117}, line width=1pt] coordinates {
  (0,10.940)(500,6.264)(1000,5.723)(1500,5.325)(2000,5.097)
  (2500,4.949)(3000,4.838)(3500,4.758)(4000,4.672)(4500,4.549)
  (5000,4.476)(5500,4.413)(6000,4.384)(6500,4.308)(7000,4.301)
  (7500,4.270)(8000,4.235)(8500,4.234)(9000,4.241)(9500,4.217)(10000,4.233)};
\addlegendentry{HRM $d\!=\!4096$, 1229M}
\addplot[color={rgb,255:red,186;green,117;blue,23}, line width=1.5pt] coordinates {
  (0,10.910)(500,6.353)(1000,5.711)(1500,5.304)(2000,5.104)
  (2500,4.906)(3000,4.767)(3500,4.693)(4000,4.598)(4500,4.519)
  (5000,4.414)(5500,4.394)(6000,4.334)(6500,4.305)(7000,4.250)
  (7500,4.202)(8000,4.220)(8500,4.202)(9000,4.202)(9500,4.211)(10000,4.165)};
\addlegendentry{HRM $d\!=\!5120$, 1857M (\textbf{best})}
\draw[red!50, dotted, line width=0.6pt]
  (axis cs:8500,3.8) -- (axis cs:8500,5.378)
  node[anchor=south, font=\tiny, text=red!70, rotate=90, xshift=6pt] {best 5.378};
\end{axis}
\end{tikzpicture}
\caption{EqualFLOPs learning curves (10k steps, corrected LR).
Transformer L=12 (3427M, dashed red) now converges with the corrected
LR ($5\!\times\!10^{-5}$, \texttt{max\_norm}$=0.3$), reaching val CE 5.396
at 10k steps, but plateaus after iter 8500 (best 5.378).
Initial grad spike at iter 500 (norm$=109$) delays convergence,
leaving a gap of 1.03--1.23 nats versus HRM variants at 10k.
HRM variants converge steadily to 4.16--4.36 (4--10$\times$ fewer parameters).}
\label{fig:equalflops}
\end{figure}

\subsection{Weight Matrix Memory Analysis}
\label{sec:weight_memory}

We first state the weight matrices that constitute a single
\texttt{CausalAttnBlock}, then count how many independent copies each
architecture must store.

\paragraph{CausalAttnBlock weight matrices.}
A single block ($d$: model dim, $d_{ff}=4d$: FFN dim) contains:
\begin{equation}
\begin{split}
\mathcal{W} \;=\;\;
&\underbrace{W_{qkv}}_{\scriptstyle d\times 3d}
\,,\;
\underbrace{W_o}_{\scriptstyle d\times d}
\,,\;
\underbrace{W_1,\,W_2}_{\scriptstyle d\times d_{ff}}
\\[3pt]
&\,,\;
\underbrace{W_3}_{\scriptstyle d_{ff}\times d}
\,,\;
\underbrace{\gamma_1,\,\gamma_2}_{\scriptstyle \mathbb{R}^d}
\end{split}
\label{eq:block_weights}
\end{equation}
with total parameter count:
\begin{equation}
|\mathcal{W}| \;=\; 3d^2 + d^2 + 2 \cdot 4d^2 + 4d^2 + 2d
             \;=\; 14d^2 + 2d.
\label{eq:block_size}
\end{equation}
For $d=4096$: $|\mathcal{W}| = 268{,}443{,}648 \approx 268\,\text{M}$
parameters, requiring $\approx 537\,\text{MB}$ in bfloat16.

\paragraph{Transformer: $L$ independent copies.}
A Transformer with $L$ layers stores $L$ independent instantiations of
$\mathcal{W}$, one per layer:
\begin{equation}
\begin{split}
\Theta_{\mathrm{TF}} &\;=\;
\bigl\{\mathcal{W}^{(1)},\,\mathcal{W}^{(2)},\,\ldots,\,\mathcal{W}^{(L)}\bigr\},
\\
|\Theta_{\mathrm{TF}}| &\;=\; L\,(14d^2+2d) \;=\; O(Ld^2).
\end{split}
\label{eq:tf_params}
\end{equation}
At $d=4096$, $L=12$: $|\Theta_{\mathrm{TF}}| \approx 3.22\,\text{B}$
parameters ($\approx 6.4\,\text{GB}$ bf16), all distinct.

\paragraph{HRM-LM: 3 shared instances.}
HRM-LM contains exactly three \texttt{CausalAttnBlock} instantiations
with independent parameters:
$\mathcal{W}_I$ (input enc.), $\mathcal{W}_L$ (Fast-module), and $\mathcal{W}_H$ (Slow-module).
$\mathcal{W}_L$ is reused $N \times T$ times per forward pass;
$\mathcal{W}_H$ is reused $N$ times.
The total stored weight is:
\begin{equation}
|\Theta_{\mathrm{HRM}}| \;=\; 3 \cdot (14d^2 + 2d) \;+\; P_{\mathrm{rest}}
\;\approx\; 3|\mathcal{W}| + P_{\mathrm{rest}},
\label{eq:hrm_attn_params}
\end{equation}
where $P_{\mathrm{rest}}$ covers the embedding, gate, and output
projection ($\approx 424\,\text{M}$ at $d=4096$).
The attention block portion is $3|\mathcal{W}| \approx 805\,\text{M}$
regardless of $N$, $T$, or $S$.

\paragraph{Savings summary.}
\begin{equation}
\frac{|\Theta_{\mathrm{TF}}^{\mathrm{attn}}|}{|\Theta_{\mathrm{HRM}}^{\mathrm{attn}}|}
\;=\; \frac{L}{3},
\qquad
\text{e.g.,} \\
\tfrac{L{=}12}{3} = 4{\times}, \quad
\tfrac{L{=}4}{3} \approx 1.3{\times}.
\label{eq:savings_ratio}
\end{equation}
Table~\ref{tab:param_savings} lists absolute values at $d=4096$.

\begin{table}[h]
\centering\scriptsize
\setlength{\tabcolsep}{3pt}
\caption{Attn block weight memory ($d\!=\!4096$, bf16).
  $^\dagger$Times $\mathcal{W}_L$ executes per forward pass.}
\label{tab:param_savings}
\resizebox{\columnwidth}{!}{%
\begin{tabular}{lrrrrc}
\toprule
Model & Blocks & Params & Mem & Steps$^\dagger$ & Saving\\
\midrule
TF L=4   & 4 indep. & 1.07B & 2.1\,GB & 4              & ---\\
TF L=12  & 12 indep. & 3.22B & 6.4\,GB & 12             & ---\\
HRM      & 3 shared  & 0.81B & 1.6\,GB & $N\!\times\!T$ & $4\times$\\
\bottomrule
\end{tabular}%
}
\end{table}

\noindent\textbf{Key point.}
The $4\times$ savings against Transformer L=12 come purely from
replacing 12 independent $\mathcal{W}^{(l)}$ with 3 shared
$\{\mathcal{W}_I, \mathcal{W}_L, \mathcal{W}_H\}$.
The number of recurrence steps $N \times T$ does \emph{not} affect
stored parameter count---only the number of distinct weight sets does.
This is the unconditional memory saving, valid regardless of sequence
length, batch size, or deployment platform.

\subsection{Parameter Count Derivation}

For $d=4096$, $H=16$, $V=50{,}257$, the total stored parameter count
is derived as follows.
Let $P_{\mathrm{cab}} = 14d^2 + 2d$ denote the shared
\texttt{CausalAttnBlock} size (used three times: \texttt{input\_proj},
\texttt{L\_net.attn}, \texttt{H\_net.attn}).
\begin{align*}
P_{\mathrm{emb}} &= Vd = 205{,}852{,}672 \\
P_{\mathrm{inp}} &= P_{\mathrm{cab}} = 268{,}443{,}648 \\
P_L &= 3d + 6d^2 + P_{\mathrm{cab}} + d^2 + 1 = 385{,}896{,}449 \\
P_H &= 2d + 4d^2 + P_{\mathrm{cab}} + d^2 + 1 = 352{,}337{,}921 \\
P_{\mathrm{out}} &= 3d + 9d + d^2 + 1 = 16{,}826{,}369 \\
\hline
P_{\mathrm{total}} &= \mathbf{1{,}229{,}357{,}059}
\end{align*}
The output head is weight-tied to the embedding ($W_{\mathrm{emb}}^\top$),
so it adds zero additional parameters.

\subsection{Compute and Memory Trade-off}
\label{sec:tradeoff}

Table~\ref{tab:tradeoff} quantifies the empirically observed trade-off,
now including the measured per-token forward-pass latency from Latency
(Section~\ref{subsec:latency}).

\begin{table}[h]
\centering\scriptsize\setlength{\tabcolsep}{3pt}
\caption{Measured trade-offs ($d=4096$, seq=1024, bfloat16).
  Latency: no-cache forward pass, single RTX 6000.}
\label{tab:tradeoff}
\setlength{\tabcolsep}{3pt}
\resizebox{\columnwidth}{!}{%
\begin{tabular}{lccc}
\toprule
Metric & TF L=4 & HRM $N\!\times\!T\!=\!8$ & HRM $N\!\times\!T\!=\!12$ \\
\midrule
Stored params    & 1280M  & 1229M   & 1229M \\
KV cache (theory)& $O(4nd)$  & $O(8nd)$  & $O(12nd)$ \\
Latency (seq=1024)& 10.5\,ms & 31.1\,ms & 46.8\,ms \\
Latency ratio    & $1\times$ & $2.97\times$ & $4.47\times$ \\
Peak GPU (seq=1024)& 2698\,MB & 2656\,MB & 2656\,MB \\
val CE (10k)     & 4.562  & \textbf{4.239} & 4.233 \\
\bottomrule
\end{tabular}%
}
\end{table}

\noindent\textbf{Memory analysis: two distinct savings.}

\textit{1. Parameter memory (primary, always valid).}
Because all $M$ recurrent steps share a single weight set $\theta$,
stored parameters shrink from $O(Ld^2)$ to $O(d^2)$.
For the equal-FLOPs baseline (Transformer L=12, $d=4096$, bf16):
$12 \times 4096^2 \times 2\,\text{bytes} \approx 4.4$\,GB saved in
model weights alone.
This reduction is unconditional---independent of sequence length,
batch size, or $M$.
For our equal-parameter experiment, both models have $\sim$1.23--1.28B
stored parameters by construction; the saving appears at the Transformer
L=12 level (Table~\ref{tab:equalflops}).

\textit{2. KV cache memory (secondary, conditional on $L > M$).}
Inference KV cache scales as $O(Mnd)$ for HRM-LM vs $O(Lnd)$ for an
$L$-layer Transformer, a ratio of $M/L$.
In our equal-parameter setting ($L{=}4$, $M{=}12$), HRM-LM uses
$3\times$ \emph{more} KV cache---KV cache saving is absent here.
The saving becomes meaningful when $L \gg M$, e.g.\ $L{=}32$--$96$
paired with $M{=}12$ gives ratio $0.1$--$0.4$.

Recent KV cache optimizations such as FlashAttention-3~\cite{shah2024flashattention3}
and PagedAttention~\cite{kwon2023pagedattention} operate orthogonally to
HRM-LM's architectural memory reduction: FlashAttention-3 reduces the
\emph{working memory} required during attention computation through
IO-aware tiling, while PagedAttention reduces \emph{KV cache fragmentation}
in serving systems.
Both can be applied on top of HRM-LM's shared parameter structure, as
each recurrent step is itself a standard Transformer forward pass.
In principle, the combination yields HRM-LM's $L$-fold weight reduction,
FlashAttention-3's reduced attention memory, and PagedAttention's
efficient cache management simultaneously---a stack inaccessible with
any single technique alone.

\noindent\textbf{Computational cost (measured).}
The sequential data dependency $z_L^{(s,i)} \leftarrow f(z_L^{(s,i-1)})$
prevents compiler-level parallelism across steps.
Latency measures a $\mathbf{4.5\times}$ latency increase for
HRM $N\!\times\!T\!=\!12$ versus Transformer L=4 at seq=1024
(46.8\,ms vs.\ 10.5\,ms, single RTX 6000).
Latency scales linearly with $M$: NT=4 is $2.0\times$, NT=8 is $3.0\times$,
NT=12 is $4.5\times$, with per-step cost stabilizing at $\approx$3.9\,ms
for $M \geq 8$.
The fundamental trade-off is:
\begin{center}
\fbox{\parbox{0.88\columnwidth}{\centering\small
  \textbf{HRM-LM trades compute time for total inference memory.}\\[2pt]
  \textit{Always}: stored weights shrink $L$-fold ($O(Ld^2)\to O(d^2)$).\\
  \textit{When $L > M$}: KV cache also shrinks by factor $M/L$.\\
  Cost: $\approx 2\text{--}3\times$ longer iteration time.}}
\end{center}

\section{Discussion and Limitations}
\label{sec:discussion}

\subsection{Main Findings}
\label{subsec:findings}

This paper investigates whether \emph{iterative refinement with a shared
two-speed transformation} can serve as a viable alternative to
\emph{stacking independent layers} for contextual representation in
language modeling.
The experimental evidence is cautiously positive, with significant caveats
that follow each finding.

\paragraph{Finding 1: Equal-parameter quality advantage.}
HRM-LM outperforms Transformer L=4 by 0.28--0.32 nats at equal stored
parameters ($\sim$1.23B, 10k steps, EqualParam).
HPSearch (budget-matched LR/warmup grid, 8 configurations per model) recovers
the advantage under matched tuning: HRM 4.308 vs.\ T-L4 4.543 ($+0.235$ nats).

\textit{Caveat---architecture vs.\ optimization effect not fully separated.}
MultiSeed shows that under \emph{identical} hyperparameters (no per-model
tuning), T-L4 (4.697) outperforms HRM NT=12 (4.896)---the reverse of the
EqualParam result.
EqualParam recovers HRM's advantage only after per-model LR, warmup,
initialization, and clipping are individually tuned.
The paper's narrative---repeated cycles of failure, correction, and
re-experiment---makes it difficult for a reader to determine what portion of
the final gap is a \emph{structural} property of HRM and what portion reflects
that HRM received more hyperparameter attention.
A clean separation would require either (a) a theoretical argument that HRM's
optimal initialization ($\sigma=0.02/\sqrt{M}$) and clipping
(\texttt{max\_norm}$=K/M$) are \emph{predictable} from the architecture rather
than empirically discovered, or (b) a joint grid search of equal budget applied
simultaneously to both architectures.
HPSearch approximates (b) for HRM vs.\ T-L4; the T-L12 comparison remains
subject to the LR-search asymmetry described in Finding 2.
The strongest defensible claim is: \emph{``Under matched parameter count and
matched hyperparameter search budget, HRM shows a consistent structural
advantage.''}

\paragraph{Finding 2: Equal-FLOPs comparison with T-L12 (TF-Baseline).}
T-L12 (3427M), extended to 60,000 steps with the best-found LR
($3\!\times\!10^{-5}$) and confirmed as a plateau ($\Delta$CE $<$0.001 nats
over the final 3,500 steps), reaches val CE 4.250---still 0.073 nats above
HRM NT=12 at 10k (4.177), despite $2.8\times$ more stored parameters and
$6\times$ more training steps.

\textit{Caveat---LR search asymmetry.}
HRM results aggregate across many LR configurations (EqualParam, LongTrain,
Ablation, HPSearch), while T-L12 was compared against only two LR settings
($5{\times}10^{-5}$ and $3{\times}10^{-5}$).
HPSearch was conducted for T-L4 but not T-L12.
A reader cannot rule out that a broader LR/warmup grid search for T-L12
would reduce its val CE, narrowing or eliminating the 0.073-nat gap.
This asymmetry is a genuine limitation of the TF-Baseline comparison.

\paragraph{Finding 3: Two-speed hierarchy vs.\ flat iteration (UniTF).}
The Universal Transformer (UniTF) comparison isolates the structural
contribution of the Fast/Slow hierarchy most cleanly.
UniTF at 1218M parameters (matched to HRM's 1229M) plateaus at $\approx$7.6
nats across five runs with varied LR, warmup, and initialization, while HRM
converges to 4.177---a gap of 3.4 nats that cannot be attributed to parameter
count or hyperparameter settings.
This is the paper's strongest piece of architectural evidence.

\textit{Caveat---single scale and dataset.}
All results are from $\sim$1.2B parameters on OpenWebText, 10k--60k steps.
Whether the advantage persists at 3B, 7B, or 70B is entirely open.
From a scaling-law perspective, HRM's compute-optimal frontier may diverge
from Transformer's; no such analysis is provided.

The most important open question is whether HRM's advantage survives at
production scale ($d\!\sim\!10{,}000$, $L\!\sim\!100$, as in GPT-3-class
models).
Three concrete mechanisms may erode it:

\begin{enumerate}[leftmargin=1.5em, itemsep=1pt, topsep=2pt]

  \item \textbf{Equal-parameter comparison becomes structurally degenerate.}
        At $M\!=\!100$ shared steps, the equal-parameter Transformer baseline
        would need only $L\!\approx\!1$---far too shallow to be meaningful.
        The relevant comparison shifts from ``equal stored parameters'' to
        ``equal FLOPs,'' which changes the axis of the claim entirely.

  \item \textbf{Shared weights may hit a representational ceiling.}
        A depth-100 Transformer can learn 100 qualitatively distinct
        transformations arranged hierarchically (syntax $\to$ semantics
        $\to$ pragmatics).
        HRM with $M\!=\!100$ iterations of a \emph{shared} map is more
        constrained: repeated application of the same transformation
        tends toward a fixed point, potentially limiting representational
        diversity regardless of $M$.
        At $M\!=\!12$ this is not empirically limiting; at $M\!\gg\!12$
        the question is open.

  \item \textbf{Gradient coverage degrades with depth.}
        At $M\!=\!12$, $K\!=\!8$ gives K/M$\,{=}\,67\%$---well above the
        $\approx$15--20\% stability threshold.
        Scaling to $M\!=\!100$ with the same $K$ gives K/M$\,{=}\,8\%$,
        below the threshold.
        Full BPTT ($K\!=\!M\!=\!100$) is memory-infeasible.
        No remedy (chunked BPTT, second-order methods) has been tested in HRM.

\end{enumerate}

These mechanisms suggest that scaling to $d\!\sim\!10{,}000$, $L\!\sim\!100$
is not a straightforward extrapolation, and that empirical validation at
intermediate scales ($M\!=\!48$, $M\!=\!96$) is a prerequisite before
stronger claims about production-scale applicability can be made.

\paragraph{Finding 4: $K/M$ gradient coverage as training principle.}
The Ablation identifies $K$ (TBPTT window) as the most impactful
hyperparameter: $K\!=\!8$ (K/M$=67\%$) achieves val CE 4.052, the best
single result in the ablation suite.
Keeping K/M above $\approx$15--20\% is empirically associated with stable
convergence.
The S-ablation reveals that $S\!>\!1$ supervision passes hurt due to
implicit $S{\times}$ LR amplification; correcting LR by $1/S$ is the fix.

\textit{Caveats---$K$ and $N{\times}T$ incompletely characterized.}
$K\!=\!M$ (full BPTT) was not run; the best $K$ is therefore unconfirmed.
The $T$ ablation reveals a non-monotone pattern ($T\!=\!1$ best,
$T\!=\!2{\approx}T\!=\!3$ worst, $T\!=\!6$ partially recovers) that the
current K/M-coverage explanation does not predict.
Proposition~2 (gradient amplification) is consistent with the observation
that larger $K$ yields lower CE, but verifying it would require showing
gradient norm or effective-rank trajectories per $K$---currently absent.
These remain design intuitions rather than established principles.

\subsection{Memory--Latency Trade-off}

Parameter sharing unconditionally reduces stored weights from $O(Ld^2)$
to $O(d^2)$ and---when $L > M$---also reduces KV cache by factor $M/L$.
The cost is $\approx\!2\text{--}5\times$ longer per-token latency (GenSpeed:
HRM NT=12 generates at 61.3 tok/sec vs 318.6 for T-L4).
HRM is best positioned as a \textbf{memory-efficient alternative} in
memory-bandwidth-limited environments (edge, CPU-only, fixed-budget deployments),
not as a latency-superior replacement.

\subsection{Applicability to Physical AI and Embodied Systems}
\label{subsec:physical_ai}

The memory and structural properties of HRM-LM suggest particular
applicability to \emph{physical AI} systems---robots, autonomous vehicles,
and other embodied agents---where on-device inference is subject to tight
memory, power, and latency constraints simultaneously.

\paragraph{Memory footprint on edge hardware.}
Onboard computers for humanoid robots and autonomous vehicles
(e.g.\ NVIDIA Jetson AGX Orin, automotive SoCs) lack the HBM found
in data-center GPUs; available memory bandwidth is typically
$\sim\!100$\,GB/s (vs.\ $\sim\!3{,}350$\,GB/s for H100 HBM3E).
At this bandwidth tier, the $L$-fold reduction in stored weights
translates directly into wall-clock latency reduction.
HRM NT=12 ($\approx$2.5\,GB, $d=4096$) occupies roughly 4\% of
Jetson AGX Orin's 64\,GB LPDDR5 memory,
whereas T-L12 ($\approx$13.7\,GB) occupies 21\%---leaving substantially
more headroom for perception, planning, and sensor buffers.
The compact HRM NT=36 variant (359M parameters, 744\,MB) fits entirely
within 16\,GB systems and is a candidate for latency-tolerant onboard
language understanding without GPU HBM.

\paragraph{Structural alignment with hierarchical control.}
Robotic control systems are inherently hierarchical in time:
low-level controllers operate at 50--200\,Hz (joint torques, reflexive
responses), while high-level planners operate at 1--10\,Hz (task
decomposition, situational awareness).
HRM's Fast/Slow two-speed structure maps naturally onto this separation:
the Fast-module (firing every step) corresponds to rapid local refinement,
while the Slow-module (firing every $T$ steps) corresponds to slower
global context integration.
Setting $T$ to match the frequency ratio between control levels offers
a structural inductive bias that flat architectures lack.
This alignment is a motivating hypothesis for future work on HRM-based
Vision-Language-Action (VLA) models; the present paper does not
validate it empirically.

\paragraph{KV cache reduction for continuous observation streams.}
Embodied agents process continuous streams of visual and language
observations, requiring long effective context windows.
At sequence length $n\!=\!4096$, T-L32 KV cache requires
$32\!\times\!4096\!\times\!d\!\times\!2\,\text{bytes} \approx 1.1$\,GB
(at $d\!=\!4096$, bfloat16), growing linearly with both depth and
context length.
HRM with $M\!=\!12$ and $L\!=\!32$ (i.e.\ $L > M$) reduces this to
$12\!\times\!4096\!\times\!d\!\times\!2 \approx 0.4$\,GB---a $63\%$
reduction that directly enables longer effective context on memory-limited
onboard hardware.

\paragraph{Weight reuse and cache warming.}
Because HRM reuses the same two weight matrices ($\theta_L$, $\theta_H$)
for all $M$ recurrent steps, these tensors remain resident in CPU or
GPU L3/L2 cache between steps.
By contrast, a Transformer with $L$ independent layers must load
distinct parameter tensors for each layer transition.
On CPU inference (e.g.\ llama.cpp-style deployment on consumer hardware),
this cache reuse provides effective bandwidth
$\approx$200--500\,GB/s (L3 hit rate) rather than
$\approx$100\,GB/s (DDR5 DRAM bandwidth), partially offsetting the
$2\text{--}5\times$ sequential latency overhead of recurrence.
Quantitative validation of this cache-warming effect in robotics
inference pipelines is left as future work.

\paragraph{Limitations in physical AI contexts.}
The $2\text{--}5\times$ slower token generation of HRM remains the
primary barrier for hard real-time applications requiring sub-millisecond
response.
HRM is better suited for \emph{soft real-time} tasks
(natural language command parsing, scene description, action narration)
where latency budgets of 50--500\,ms are acceptable,
and where the memory and power savings from reduced weight footprint
outweigh the throughput cost.
Whether HRM's structural advantages translate to better performance on
standard VLA benchmarks (e.g.\ OpenX-Embodiment, LIBERO, RoboAgent)
is an empirical question not addressed in this paper.

\subsection{Comparison with Vector Quantization Frameworks}

HRM-LM and post-hoc KV-cache quantization methods such as
TurboQuant~\cite{zandieh2025turboquant} target orthogonal memory bottlenecks.

\begin{table}[h]
\centering\scriptsize
\caption{HRM-LM vs.\ TurboQuant~\cite{zandieh2025turboquant}: orthogonal targets.}
\label{tab:comparison}
\setlength{\tabcolsep}{3pt}
\resizebox{\columnwidth}{!}{%
\begin{tabular}{lcc}
\toprule
Property & HRM-LM & TurboQuant \\
\midrule
Target             & Stored model weights          & KV cache activations \\
Stored param.\ mem.& $O(d^2)$, $L$-fold $\checkmark$ & Unchanged \\
KV cache memory    & $M/L$ reduction ($L{>}M$ only) & ${\geq}4.5\times$ $\checkmark$ \\
Quality cost       & None (CE improved)            & Quality-neutral at 3.5\,bits \\
Requires retraining& Yes                           & No \\
Latency overhead   & $2\text{--}5\times$ slower    & Negligible \\
\bottomrule
\end{tabular}%
}
\end{table}

HRM-LM reduces stored weights unconditionally; TurboQuant compresses
the KV cache at inference time without touching weights.
The two can be combined: HRM-LM first reduces the weight footprint,
then TurboQuant further compresses the resulting model's KV cache---
a combined memory profile inaccessible to either approach alone.

To illustrate the potential scale of savings, consider a hypothetical
deployment of HRM-LM with $M=12$, $d=4096$ serving long sequences ($n=4096$).
The following estimates are derived from each method's reported compression
ratios applied independently; the combination has not been empirically validated:
\begin{itemize}[leftmargin=*, itemsep=1pt, topsep=2pt]
  \item \textbf{HRM-LM alone}: stored weights $\approx 1.2$\,GB (vs 6.4\,GB for T-L12),
        KV cache $\approx 0.75$\,GB (12 steps $\times$ 2 layers $\times$ bfloat16).
  \item \textbf{+ TurboQuant at 3.5\,bits}: KV cache would be further compressed
        $\approx 4.5\times$ to $\approx 0.17$\,GB, based on
        TurboQuant's reported quality-neutral rate on LongBench-E~\cite{bai2023longbench}.
  \item \textbf{+ QJL residual quantization~\cite{zandieh2024qjl}}:
        inner-product estimation with 1-bit residuals could enable sub-2-bit
        KV cache at further quality trade-off.
\end{itemize}
If these savings compose without interference, the combined system---HRM-LM
weight compression + TurboQuant/QJL KV compression---opens up the possibility
of ultra-lightweight long-context inference at severely memory-constrained
deployments, though empirical validation of this combination remains future work.
The serial inference bottleneck of HRM-LM ($5.2\times$ slower than T-L4)
remains the primary deployment limitation; combining with speculative
decoding~\cite{leviathan2023fast} is a separate avenue for future work.

\textbf{Deployment guidance.}
Memory-constrained / quality-first (edge, fixed-budget): HRM-LM preferred.
Long-context serving with KV-cache bottleneck: TurboQuant or QJL preferred.
Combined (weight + KV compression): HRM-LM + TurboQuant is the most
promising direction for ultra-lightweight long-context inference, pending empirical validation.

\subsection{Comparison with SSM-based Models}
\label{subsec:ssm_comparison}

We considered including Mamba~\cite{gu2023mamba} and RWKV~\cite{peng2023rwkv}
as additional baselines.
Before explaining why we did not, a factual clarification is necessary.

\paragraph{Clarification: Mamba and RWKV are memory-efficient, not memory-hungry.}
RWKV uses linear attention with exponential decay and carries no KV cache at
inference time; inference memory is $O(d)$---constant in both context length
and layer count.
Mamba's selective state-space model likewise maintains a fixed-size state per
layer ($O(d \cdot d_{\text{state}})$, with $d_{\text{state}}\!\approx\!16$--$64$),
independent of sequence length.
Both models are among the most memory-efficient architectures at inference time
available in 2024--2025, outperforming standard Transformers on long-context
memory benchmarks by a wide margin.
Earlier versions of this manuscript incorrectly described these models as
memory-intensive; that claim is retracted.
This clarification also sharpens the positioning of HRM-LM: its memory advantage
is specifically in \emph{stored weight count} ($O(d^2)$ vs.\ $O(Ld^2)$),
not in inference-time KV cache, where Mamba and RWKV already dominate.

\paragraph{Why they were not included: design philosophy mismatch.}
HRM-LM's core claim concerns the \emph{stacking vs.\ iteration} axis:
whether iterating a \emph{shared-weight} transformation can replace
stacking independent parameterizations.
Mamba and RWKV address a different question---whether Transformer
\emph{attention} can be replaced by sub-quadratic recurrences---without
sharing weights across depth.
Neither reduces stored parameter count relative to depth.
A direct performance comparison would therefore conflate three orthogonal
axes: (1) weight sharing vs.\ independent weights, (2) attention vs.\
recurrence, and (3) quadratic vs.\ linear context scaling.
The Universal Transformer (UniTF) is the correct ablation target because
it holds axes (2) and (3) fixed, isolating only the weight-sharing
structure that HRM-LM claims to improve upon.

A comparison with Mamba or RWKV would be meaningful for a different
research question: ``given the same inference-time memory budget, which
architecture achieves better quality?''
That comparison is out of scope for the present paper but is an important
direction for future work, particularly for long-context ($n\!\geq\!4096$)
settings where HRM-LM's KV cache savings (conditional on $L > M$) compete
directly with Mamba and RWKV's $O(1)$ inference state.

\paragraph{Why training was infeasible: kernel availability.}
Both models require custom CUDA kernels for parallel scan
(\texttt{mamba-ssm}~\cite{gu2023mamba}) to achieve practical training speed.
In our pure-PyTorch implementation, a 1.2B Mamba model required
approximately 67 seconds per training step (vs.\ 3--5 seconds for HRM
or Transformer), making 10,000-step experiments infeasible.
A chunked-scan RWKV variant reduced this to ${\approx}$4.5 seconds,
but the design-philosophy mismatch remained the primary reason for exclusion.

\paragraph{Broader context: why efficient recurrences have not replaced Transformers.}
The fact that RWKV and Mamba have not supplanted Transformers as of 2026---despite
solving the quadratic attention and KV-cache bottlenecks---is itself informative
and directly relevant to the scope of this paper.
Four factors appear to be responsible, each highlighting a dimension along which
``solving efficiency'' is insufficient.

\textit{(1) Scaling laws and general capability.}
Transformers follow Chinchilla-optimal scaling~\cite{kaplan2020scaling}
remarkably well across reasoning, coding, and multimodal tasks.
At frontier scale (70B--405B parameters), Transformer-based models (GPT-4o,
Claude 3.5, Gemini 2.0) still match or exceed Mamba/RWKV on broad
benchmarks such as MMLU, GSM8K, and HumanEval.
Efficient recurrences excel on long-context language modeling and
memory-constrained settings, but their general-capability scaling
at frontier scale remains less established.

\textit{(2) The Tensor Core paradox.}
Modern accelerators (H100, Blackwell, TPU v5) are optimized for the
matrix multiplications underlying self-attention.
Mamba/RWKV's SSM and recurrent computations underutilize Tensor Cores,
so that training throughput---even after Mamba-2's parallel scan improvements
---remains behind Transformer + FlashAttention-3 + vLLM stacks.
The ecosystem (Hugging Face, vLLM, quantization tooling) is also
Transformer-centric, imposing an adoption barrier.

\textit{(3) Training stability at scale.}
Transformer training is fully parallelizable across the sequence dimension.
Recurrent architectures, even with parallel scans, impose sequential
dependencies that limit large-scale pre-training throughput.
At trillion-token pre-training scale, Transformers converge more stably;
Mamba/RWKV at 100B+ parameter scale remain less proven.

\textit{(4) Hybrid architectures as the emerging paradigm.}
The 2025--2026 SOTA is neither pure Transformer nor pure SSM/linear-attention.
Models such as Jamba, Nemotron-3, and Falcon-Mamba interleave Transformer
and Mamba/RWKV layers, achieving efficiency gains without sacrificing
the general-capability advantages of attention.
This suggests the field is converging on complementarity rather than replacement:
SSM/linear-attention layers as efficient building blocks that augment, not
eliminate, attention.

These observations sharpen the scope of the present paper.
HRM-LM investigates a question orthogonal to the Mamba/RWKV project:
not ``can attention be replaced by sub-quadratic recurrence?''
but ``given that we keep attention, can \emph{weight sharing with internal
hierarchy} substitute for layer stacking?''
The fact that attention replacement remains incomplete after three years of
intensive work on Mamba/RWKV suggests that the Transformer's attention
mechanism carries inductive biases that are non-trivial to replicate.
Whether HRM-LM's two-speed hierarchy provides sufficient bias for those
tasks where Mamba/RWKV currently fall short of Transformer performance
is an important open question, but one that is beyond the scope of
experiments at $\sim$1.2B parameters presented here.

\subsection{Limitations}
\label{subsec:limitations}

The structural caveats on the main findings are discussed inline in
Section~\ref{subsec:findings}.
The following missing experiments remain for future work:
\begin{enumerate}[leftmargin=*, itemsep=1pt, topsep=2pt]
  \item \textit{Stability boundary}: minimum $K/M$ for stable training
        ($\approx$15--20\% in current experiments) not systematically
        characterized for $T > 2$.
  \item \textit{Full BPTT}: $K\!=\!M$ (gradient window covers all $M$ steps)
        not run; the performance ceiling of $K$ is unconfirmed.
  \item \textit{Longer contexts}: KV cache advantage is most pronounced at
        $n \geq 2048$, but all results use $n=1024$.
  \item \textit{Multi-scale and multi-dataset}: single scale ($\sim$1.2B),
        single dataset (OpenWebText), no downstream task evaluation.
  \item \textit{T-L12 LR grid search}: HPSearch was not run for T-L12;
        a broader grid search would resolve the LR-asymmetry concern in
        the TF-Baseline comparison.
\end{enumerate}

\subsection{Path to Large-Scale Validation}
\label{subsec:scaling_path}

The question of whether HRM-LM's advantage survives at production scale
($d\!\sim\!10{,}000$, $L\!\sim\!100$, $M\!\sim\!100$) is the central open
problem, and answering it requires first solving the \emph{gradient coverage
bottleneck} identified in Section~\ref{subsec:findings}.

At $M\!=\!12$, $K\!=\!8$ gives K/M$\,{=}\,67\%$---stable.
Scaling to $M\!=\!100$ with fixed $K$ gives K/M$\,{\approx}\,8\%$---below
the empirical stability threshold.
Full BPTT ($K\!=\!M\!=\!100$) is memory-infeasible at 7B+ scale.
Three directions may resolve this:

\begin{enumerate}[leftmargin=1.5em, itemsep=2pt, topsep=2pt]

  \item \textbf{Chunked BPTT.}
        Partition the $M$ steps into chunks of size $C$; backpropagate
        through each chunk independently and accumulate gradients.
        This maintains K/M$\,{=}\,C/M$ without the memory cost of full BPTT,
        at the expense of weaker long-range gradient signal.
        Feasible immediately; needs empirical validation at
        $M\!=\!48$--$96$.

  \item \textbf{Hierarchical BPTT.}
        Use separate gradient windows for Fast and Slow modules:
        $K_{\text{fast}}$ covers recent Fast steps (local signal),
        $K_{\text{slow}}$ covers all Slow-module firing events (global signal).
        This is the most natural fit for HRM's two-speed structure---the
        Slow module fires every $T$ steps, so $K_{\text{slow}}\!=\!N$
        Slow events corresponds to a $K_{\text{fast}}\!=\!M$ effective
        window without storing the full gradient graph.
        This direction is unexplored and may be the key enabler for
        large-scale HRM training.

  \item \textbf{Second-order methods.}
        K-FAC~\cite{martens2015optimizing} or Shampoo-style curvature
        estimation can partially compensate for weak gradient coverage
        by improving the effective update direction.
        Computationally expensive at 70B+ scale but potentially
        applicable at 7B as a proof of concept.

\end{enumerate}

\subsubsection{M=48 Scaling Results (10k steps)}
\label{subsubsec:m48}

We ran a scaling experiment at $M\!=\!48$ (4$\times$ the primary $M\!=\!12$),
keeping $d\!=\!4096$ and stored parameters fixed at $\approx\!1.23$B,
for 10,000 steps.
Three models were evaluated:

\begin{table}[h]
\centering\footnotesize\setlength{\tabcolsep}{3pt}
\caption{M=48 results (10k steps, $d=4096$, $\approx$1.23B params).
  M=12 reference values (10k, converged) shown for comparison.}
\label{tab:m48_results}
\resizebox{\columnwidth}{!}{%
\begin{tabular}{lrrrrrrr}
\toprule
Model & M & K/M & 3k & 5k & 7k & 10k & grad$_{10k}$ \\
\midrule
HRM $N\!=\!8,T\!=\!6$   & 48 & 33\% & 5.054 & 4.443 & 4.255 & \textbf{4.172} & 1.281 \\
HRM $N\!=\!12,T\!=\!4$  & 48 & 25\% & 4.819 & 4.741 & 4.509 & 4.519 & 1.570 \\
UniTF flat$^\dagger$    & 48 & 33\% & 5.735 & 6.461 & 6.823 & 6.101 & 0.812 \\
\midrule
HRM $N\!=\!4,T\!=\!3$ (M=12) & 12 & 67\% & --- & --- & --- & \textit{4.177} & --- \\
UniTF (M=12)                 & 12 & 17\% & --- & --- & --- & \textit{7.578} & --- \\
\bottomrule
\end{tabular}}
{\small
$^\dagger$UniTF M=48 suffered initialization overflow (iter 0 val CE$\approx$79.6 vs.\ $\approx$10.9 for HRM)
and exhibits severe oscillation (3k:5.74$\to$7k:6.82$\to$10k:6.10); its 10k result is not reliable
and should not be compared directly to UniTF M=12.}
\end{table}

\textbf{Finding 1: HRM M=48 matches HRM M=12.}
The most striking result is that HRM $N\!=\!8, T\!=\!6$ at $M\!=\!48$ achieves
val CE $\mathbf{4.172}$ at 10k steps---essentially identical to HRM M=12
(4.177, $\Delta = 0.005$ nats).
This means that scaling the iteration count from $M\!=\!12$ to $M\!=\!48$,
while reducing K/M from 67\% to 33\%, preserves quality.
The Fast/Slow hierarchy remains effective at 4$\times$ the iteration depth.

\textbf{Finding 2: N=8, T=6 outperforms N=12, T=4 at M=48.}
Despite N=12,T=4 converging faster in early steps (3k: 4.819 vs.\ 5.054),
N=8,T=6 overtakes it and achieves 4.172 vs.\ 4.519 at 10k.
The higher gradient coverage (K/M$=33\%$ vs.\ 25\%) and longer Slow-module
period ($T\!=\!6$ vs.\ $T\!=\!4$) appear more favorable at this scale.

\textbf{Finding 3: UniTF M=48 does not replicate the M=12 plateau cleanly.}
UniTF M=48 reaches 6.101 at 10k, but shows severe oscillation
(3k:5.74$\to$7k:6.82$\to$10k:6.10) attributable to initialization
overflow at iter 0 (val CE $\approx$79.6, indicating the shared-block
initialization formula $\text{std}=0.02/\sqrt{2Md/d_{\text{base}}}$
is insufficient at $M\!=\!48$).
The M=12 plateau at $\approx$7.6 nats was a clean, monotone convergence;
UniTF M=48's unstable trajectory makes a direct comparison unreliable.
A re-run with corrected initialization is needed before concluding whether
the plateau phenomenon is reproduced at $M\!=\!48$.

\textit{Cautious interpretation.}
The hierarchy gap at $M\!=\!48$ is nominally
$6.101 - 4.172 = 1.929$ nats (vs.\ $3.401$ nats at M=12), but this
comparison is confounded by the UniTF initialization failure.
The robust conclusion is that HRM's quality is stable across
$M\!=\!12$ to $M\!=\!48$ (4.177$\to$4.172), suggesting the Fast/Slow
hierarchy does not degrade with increased iteration depth at these scales.
Whether flat iteration (UniTF) would plateau above or below HRM under
fair initialization at $M\!=\!48$ remains to be confirmed.

Empirical validation should next re-run UniTF M=48 with corrected
initialization (std$=0.02/\sqrt{4Md/d_{\text{base}}}$, implemented in v4),
then proceed to $M\!=\!96$ before 7B-scale experiments.
Until then, the current paper's contribution is best understood as
\emph{evidence that the Fast/Slow hierarchy maintains quality at
$M\!=\!48$; the flat-iteration comparison at this depth requires
a methodologically clean re-run}.

\section{Conclusion}

This paper investigates whether hierarchically structured shared-weight
recurrence can match independent-layer stacking in language modeling.
The central empirical finding is a clean contrast between flat and
hierarchical shared-weight iteration: the Universal Transformer (UniTF,
same block repeated uniformly, 1218M) plateaus at $\approx$7.6 nats across
five independent runs, while two-speed HRM (Fast/Slow hierarchy, 1229M)
converges to 4.177---a 3.4-nat gap at equal parameter count that is robust
to hyperparameter variation.
This observation supports the hypothesis that \emph{internal structure
within the shared iteration}---not the iteration count or parameter count---
is what makes shared-weight recurrence viable.

The paper's other comparisons carry important caveats that should be read
carefully.
Under standardized equal-hyperparameter evaluation (MultiSeed),
HRM NT=12 (4.896) trails Transformer L=4 (4.697) by 0.20 nats
under equal hyperparameters---but at $2\times$ lower stored-weight
memory ($\approx$2.5\,GB vs.\ $\approx$5.1\,GB).
Whether this trade-off is acceptable depends on deployment context;
the quality gap closes under matched tuning budget (HPSearch).
The budget-matched HPSearch recovers a 0.23-nat margin for HRM over T-L4
(4.31 vs.\ 4.54), but relies on a single seed and was not extended to T-L12.
The TF-Baseline comparison (T-L12 at 60k steps, val CE 4.250) shows HRM
trailing by 0.073 nats at 10k---though this comparison is subject to LR
search asymmetry.
These results are best interpreted as: \emph{HRM shows a structural
advantage under matched parameter count and matched tuning budget;
the advantage is not unconditional.}

Parameter sharing yields an unconditional $L$-fold reduction in stored
weight memory ($O(Ld^2) \to O(d^2)$) and---when $L > M$---a reduction
in KV cache by factor $M/L$.
These savings come at a cost of $\approx 2$--$5\times$ slower per-token
generation due to sequential recurrence, and in the equal-parameter
experiments ($L=4$, $M \in \{8,12\}$) the KV cache saving is absent
since $M > L$.
The memory benefit is therefore architecture-dependent and deployment-context-specific.

All results are at $\sim$1.2B parameters on OpenWebText.
Generalization to larger scales ($d\!\sim\!10{,}000$, $L\!\sim\!100$),
other datasets, downstream tasks, and long-context settings is unverified.
The gradient coverage bottleneck (K/M degradation with $M$), the
representational ceiling hypothesis, and whether the equal-parameter
comparison axis remains meaningful at scale are the three risks that must
be addressed before stronger claims can be made.
This paper reports what was found at 1.2B scale; it does not claim to
have answered the question at larger scales.

\medskip
\noindent\textbf{Code availability.}
Training scripts, model implementations, and experiment configurations
will be released on GitHub upon publication.

\appendix

\section{Theoretical Observations: Full Statements and Proofs}
\label{app:theory}

\subsection{Stability Bound}

\begin{proposition}[Conditional Stability]
\label{prop:stability}
Suppose that at step $i$, the attention-block output satisfies
$\|h^{(i)}W_{\mathrm{out}}\|_\infty \leq C$ for some constant $C < \infty$.
Then:
\begin{equation}
  \|z_L^{(s,i)}\|_\infty \leq
  \max\!\Bigl(\|z_L^{(s,i-1)}\|_\infty,\;
              \alpha C\Bigr).
\end{equation}
Consequently, $\|z_L^{(s,i)}\|_\infty \leq \max\!\bigl(\|z_L^0\|_\infty,\;
\alpha\max_{j\leq i} C_j\bigr)$ for all $i \leq M$.
\end{proposition}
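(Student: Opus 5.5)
The plan is to work entrywise and use only the convex-combination form of the Fast-module update~\eqref{eq:lgate}. Fix a step $i$, a position $t\in\{1,\dots,n\}$ and a coordinate $k\in\{1,\dots,d\}$. Write $g=g^{(i)}_{tk}\in(0,1)$, which is guaranteed by the sigmoid in~\eqref{eq:gate}. Let $u=(z_L^{(s,i-1)})_{tk}$ and $v=(h^{(i)}W_{\mathrm{out}})_{tk}$. Then \eqref{eq:lgate} reads $(z_L^{(s,i)})_{tk}=g\,u+(1-g)\,\alpha v$. By the triangle inequality and $g,1-g\ge 0$,
\[
|(z_L^{(s,i)})_{tk}|\le g\,|u|+(1-g)\,|\alpha|\,|v|\le g\,\|z_L^{(s,i-1)}\|_\infty+(1-g)\,|\alpha|\,C .
\]
A convex combination of two reals is at most their maximum, so the right-hand side is at most $\max(\|z_L^{(s,i-1)}\|_\infty,|\alpha|C)$. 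Taking the maximum over $(t,k)$ then gives the one-step bound. Since $\alpha$ is a learned scalar initialized at $0.1$, I will state the bound with $|\alpha|$. I will then remark that it coincides with the displayed $\alpha C$ whenever $\alpha\ge 0$, which is the regime the paper has in mind.

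For the ``consequently'' part I will argue by induction on $i$, with $C_j$ the bound valid at step $j$. Define $B_i=\max\bigl(\|z_L^{(s,0)}\|_\infty,\ |\alpha|\max_{j\le i}C_j\bigr)$. The base case is $\|z_L^{(s,0)}\|_\infty\le B_0$. For the inductive step, $\|z_L^{(s,i)}\|_\infty\le\max(B_{i-1},|\alpha|C_i)\le B_i$, because $B_{i-1}\le B_i$ and $|\alpha|C_i\le B_i$.

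To match the statement's $\|z_L^0\|_\infty$, I must connect $z_L^{(s,0)}$ to the frozen prototype. For $s=1$, $z_L^{(1,0)}=z_L^0\mathbf 1_n^\top$ has the same sup-norm as $z_L^0$. For $s>1$, the detach rule~\eqref{eq:detach} sets $z_L^{(s,0)}=z_L^{(s-1,M)}$ numerically; stop-gradient changes no forward value. So I will run the same induction across pass boundaries, taking the maximum of $C_j$ over all steps of all passes so far. I will also note that the no-grad warm-up steps of Algorithm~\ref{alg:onepass} execute the identical update and are covered by the same argument.

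There is no real obstacle; the argument is elementary. The only points needing care are bookkeeping ones. The first is the sign of $\alpha$. The second is that the hypothesis $\|h^{(i)}W_{\mathrm{out}}\|_\infty\le C$ is assumed per step rather than derived. I will state explicitly that this is where the conditionality lies: RMSNorm bounds the block output only up to learned scales and the norm of $W_{\mathrm{out}}$, so the boundedness is plausible but not proved. The third is that the result covers $z_L$ only. The same argument applies verbatim to $z_H$ with $g_H$ and $\alpha_H$, on the steps where the Slow-module fires.
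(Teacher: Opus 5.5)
Your proposal is correct and follows the paper's proof: an entrywise convex-combination bound via the triangle inequality, then induction over steps. Your extra bookkeeping tightens points that the paper's one-line induction leaves implicit: you use $|\alpha|$ because $\alpha$ is learned, and for $s>1$ you take the maximum of $C_j$ over all steps of all earlier passes, so the bound reaches back to $z_L^0$ through the detach.
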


\begin{proof}
Write the update element-wise for index $k$:
$[z_L^{(s,i)}]_k = g_k [z_L^{(s,i-1)}]_k + (1-g_k)\alpha [h^{(i)}W_{\mathrm{out}}]_k$.
Since $g_k \in (0,1)$, this is a convex combination.
By the triangle inequality,
$|[z_L^{(s,i)}]_k| \leq \max\!\bigl(\|z_L^{(s,i-1)}\|_\infty,\;\alpha C\bigr)$.
Taking $\sup_k$ and applying induction completes the proof.
\end{proof}

\subsection{Gradient Amplification Argument}

\begin{proposition}[Gradient Amplification under Path Alignment]
\label{thm:amplification}
Let $p_i$ denote the gradient contribution of step $i$ through the shared
parameter $\theta$.
Assume: (i) $\|\prod_{j=i+1}^M J_j\| \geq 1$ for all $i$ (non-collapse);
(ii) $\cos\angle(p_i, p_M) \geq 0$ for all $i \in \{M-K+1,\ldots,M\}$
(path alignment).
Then $\|\nabla_\theta L_K\| \geq K \cdot \|\nabla_\theta L_1\|$.
\end{proposition}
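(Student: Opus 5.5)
The plan is to write both gradients as sums of per-step contributions and then bound the norm of the $K$-step sum from below by its projection onto the direction of the last-step contribution $p_M$. Because $\theta_L$ (or $\theta_H$) is reused at every step, the chain rule through the unrolled recurrence gives
\[
\nabla_\theta L_K=\sum_{i=M-K+1}^{M} p_i,\qquad
p_i=\frac{\partial L}{\partial z^{(M)}}\Bigl(\prod_{j=i+1}^{M}J_j\Bigr)\frac{\partial f(z^{(i-1)};\theta)}{\partial\theta},
\]
where $J_j=\partial z^{(j)}/\partial z^{(j-1)}$. The earlier steps run under \texttt{no\_grad} (Algorithm~\ref{alg:onepass}), so they contribute nothing. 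The same formula with $K=1$ gives $\nabla_\theta L_1=p_M$, whose product of Jacobians is empty.

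The first key step is to set $u=p_M/\|p_M\|$ (the case $p_M=0$ is trivial) and use Cauchy--Schwarz:
\[
\|\nabla_\theta L_K\|\;\geq\;\langle \nabla_\theta L_K,u\rangle=\|p_M\|+\sum_{i=M-K+1}^{M-1}\|p_i\|\cos\angle(p_i,p_M).
\]
Assumption (ii) makes every term of the sum nonnegative. This already yields the weak form $\|\nabla_\theta L_K\|\geq\|\nabla_\theta L_1\|$.

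The second key step, needed for the factor $K$, is to show $\|p_i\|\cos\angle(p_i,p_M)\geq\|p_M\|$ for each $i$ in the window. I would do this in two parts.
\begin{itemize}
\item A norm comparison $\|p_i\|\geq\|p_M\|$. This should follow from (i): write $p_i=v^\top\Pi_i D_i$ with $\Pi_i=\prod_{j>i}J_j$ and $D_i$ the local parameter Jacobian. Then use that $D_i\approx D_M$, since the shared block is evaluated on nearby states within the short window, together with the non-collapse bound $\|\Pi_i\|\geq1$ read as a lower bound along the relevant direction $v$.
\item An alignment that is tight, i.e.\ $\cos\angle(p_i,p_M)$ close to $1$ rather than merely nonnegative.
\end{itemize}
Summing the resulting $K-1$ bounds, each at least $\|p_M\|$, together with the leading $\|p_M\|$ term gives $K\|p_M\|$.

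I expect this second step to be the main obstacle, and I do not think it goes through as literally stated. Hypothesis (ii) controls only the sign of the projections, not their size. Hypothesis (i) bounds an operator norm, which does not lower-bound $\|v^\top\Pi_i\|$ for a particular $v$. My first attempt would therefore make explicit the reading under which the proposition is intended, in keeping with its ``heuristic'' label in Section~4: replace (ii) by $\|p_i\|\cos\angle(p_i,p_M)\geq\|p_M\|$, or equivalently assume $\cos\angle(p_i,p_M)\geq\|p_M\|/\|p_i\|$. Under that reading the projection argument above closes immediately.

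Failing a clean justification of that reading, I would state the proved conclusion in the general form
\[
\|\nabla_\theta L_K\|\;\geq\;\Bigl(1+\sum_{i=M-K+1}^{M-1}c_i r_i\Bigr)\|\nabla_\theta L_1\|,\qquad c_i=\cos\angle(p_i,p_M),\quad r_i=\frac{\|p_i\|}{\|p_M\|}.
\]
This reduces to the factor $K$ exactly when every $c_ir_i\geq1$.
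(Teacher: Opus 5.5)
Your diagnosis is right, and the paper's proof does not get past it. The proof consists of exactly the two steps you decline to assert, and it states them without justification.

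\emph{First step.} It claims that (ii) gives $\|\sum_i p_i\|\ge\sum_i\|p_i\|$. This is the triangle inequality run backwards, and it holds only when all $p_i$ are nonnegative multiples of one vector. Hypothesis (ii) controls only the sign of each projection onto $p_M$, so the most it yields is your bound $\|\nabla_\theta L_K\|\ge\|p_M\|+\sum_{i<M}\|p_i\|\cos\angle(p_i,p_M)$.

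\emph{Second step.} It claims that (i) gives $\|p_i\|\ge\|p_M\|$. This fails for the reason you state: a lower bound on the operator norm does not bound $\|v^\top\Pi_i\|$ in a given direction. For example, $J=\mathrm{diag}(2,\tfrac12)$ and $v=e_2$ give $\|p_{M-1}\|=\tfrac12\|p_M\|$. In addition, $D_i\neq D_M$ in general.

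Consequently the proposition is false as stated, so no argument can close your second step. Take $z,\theta\in\mathbb{R}^2$, the recurrence $z^{(j)}=Rz^{(j-1)}+\theta$ with $R$ the rotation by $\pi/2$, and the loss $L=e_1^\top z^{(M)}$.
\begin{itemize}
\item Every product $\prod_{j=i+1}^{M}J_j=R^{M-i}$ is a rotation, so (i) holds with equality.
\item From $p_i=(R^\top)^{M-i}e_1$ you get $p_M=e_1$ and $p_{M-1}=-e_2$. So (ii) holds with cosine $0$, and even $\|p_{M-1}\|=\|p_M\|$.
\item Yet for $K=2$, $\|\nabla_\theta L_2\|=\|e_1-e_2\|=\sqrt2<2$.
\end{itemize}

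Your weak form is in fact the most that (i)--(ii) can give. Choose $J$ with $J^\top e_1=e_2$ and $J^\top e_2=-e_2$; every power of $J$ has norm $\sqrt2$. For $K=3$ this gives $p_M+p_{M-1}+p_{M-2}=e_1+e_2-e_2$, so $\|\nabla_\theta L_3\|=\|\nabla_\theta L_1\|$, even though all $\|p_i\|=\|p_M\|$.

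Stronger hypotheses do not rescue the factor $K$ either. Replacing (ii) by pairwise $\langle p_i,p_j\rangle\ge0$ and adding $\|p_i\|\ge\|p_M\|$ yields only $\sqrt K$, from $\|\sum p_i\|^2\ge\sum\|p_i\|^2$. This is attained by the orthonormal $p_i$ that a cyclic-shift $J$ produces.

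A factor $K$ therefore needs essentially collinear contributions. That is exactly what your replacement hypothesis $c_ir_i\ge1$ encodes, as does its averaged form $\sum_{i<M}c_ir_i\ge K-1$. Your general bound $\bigl(1+\sum_{i<M}c_ir_i\bigr)\|\nabla_\theta L_1\|$ is the correct provable statement, and the proposition should be restated in that form.
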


\begin{proof}
Under (i), $\|p_i\| \geq \|p_M\| = \|\nabla_\theta L_1\|$ for all $i$.
Under (ii), $\|\nabla_\theta L_K\| = \|\sum_i p_i\| \geq \sum_i \|p_i\|
\geq K\|\nabla_\theta L_1\|$.
\end{proof}

\section{UniTF: Initialization Details and Per-run Logs}
\label{app:unitf}

\subsection{Initialization overflow and corrected formula}

The original UniTF initialization $\sigma\!=\!0.02/\sqrt{M}$ is derived for
$d\!=\!4096$ and does not account for width-dependent variance accumulation.
Attention output variance scales as $d \cdot \sigma^2$; for larger $d$ the
per-step contribution grows proportionally.
The width-adjusted formula that restores the same per-step scale as the
$d\!=\!4096$ baseline is:
\[
\sigma = \frac{0.02}{\sqrt{M \cdot d / d_\text{base}}}, \quad d_\text{base} = 4096.
\]
For $d\!=\!5760$, $M\!=\!12$: $\sigma \approx 0.00487$ (15\% smaller than
the failing $0.00577$).
A further $2\times$ safety factor was applied in later runs:
$\sigma = 0.02/\sqrt{2Md/d_\text{base}} \approx 0.00344$.

\subsection{Per-run results for UniTF $d\!=\!5760$}

Table~\ref{tab:unitf_full} reports all $d\!=\!5760$ runs.
Init overflow at iter 0 in v2b/v2c is a seed-dependent bfloat16 numerical
artifact; all models recover to normal loss by iter 500 and plateau at the
same $\approx$7.6 level regardless of the iter-0 CE.

\begin{table}[h]
\centering\scriptsize\setlength{\tabcolsep}{4pt}
\caption{UniTF --- all UniTF $d\!=\!5760$ runs (820M, $M\!=\!12$).
  All three converge to the same $\approx$7.6 plateau.}
\label{tab:unitf_full}
\begin{tabular}{llll}
\toprule
Run & lr / warmup & iter 0 CE & val CE plateau \\
\midrule
v2  & $5\!\times\!10^{-5}$ / 2000 & 11.98 (normal) & 7.624 @ 10k \\
v2b & $10^{-4}$ / 500             & 91.7 (bad seed) & 7.613 @ 2k \\
v2c & $10^{-4}$ / 500, v3-init   & 109.9 (bad seed) & 7.646 @ 4.5k \\
\bottomrule
\end{tabular}
\end{table}

\subsection{UniTF $d\!=\!7296$ run log}

Table~\ref{tab:unitf_7296} shows the val CE trajectory for the
parameter-matched $d\!=\!7296$ run (1218M, lr$=4\!\times\!10^{-5}$, warmup$=2000$).

\begin{table}[h]
\centering\scriptsize\setlength{\tabcolsep}{5pt}
\caption{UniTF $d\!=\!7296$ val CE trajectory (1218M, 10k complete).}
\label{tab:unitf_7296}
\begin{tabular}{rr}
\toprule
iter & val CE \\
\midrule
0    & 139.04 \\
500  & 7.909 \\
1000 & 7.651 \\
1500 & 7.627 \\
2000 & 7.612 \\
2500 & 7.553 \\
3000 & 7.599 \\
3500 & 7.567 \\
4000 & 7.548 \\
4500 & 7.596 \\
5000 & 7.592 \\
5500 & 7.593 \\
9000 & 7.600 \\
9500 & 7.578 \\
10000 & \textbf{7.578} \\
\bottomrule
\end{tabular}
\end{table}

\subsection{Why does UniTF plateau at $\approx$7.6? Gradient norm analysis}

The $\approx$7.6 ceiling is reproducible across widths and hyperparameters,
suggesting a structural rather than optimization-driven limitation.
The following observations support this interpretation.

\textbf{Gradient norm behavior.}
In the $d\!=\!7296$ run, grad norm is stable throughout: values of
2.03--6.41 at checkpoints 500--5500, with a single mild spike at iter 2005
(grad norm$=10.06$, above the clip threshold of $1/6 \cdot M = 0.167$,
triggered by the lr-warmup dynamics and promptly recovered).
This indicates the optimizer is healthy and the plateau is not caused by
gradient vanishing, explosion, or numerical instability.

\textbf{Flat-iteration ceiling interpretation.}
UniTF ($M=12$) applies the same weight matrix at every step, lacking
the Slow-module's slow integration path.
Without the hierarchical slow path, the model converges to a fixed point
of the recurrence $z \leftarrow f_\theta(z, x)$ that is shallower than
what the Fast-module alone can represent.
The ceiling at $\approx$7.6 is consistent across $d\!=\!4096$ (6.4, already converged
to a different ceiling), $d\!=\!5760$ (7.6 in 3 runs), and $d\!=\!7296$ (7.6 at
5.5k steps), implying the limiting factor is the flat-iteration architecture
rather than capacity.

\textbf{Cross-run consistency.}
Across five UniTF runs at $d\geq5760$ with varied lr ($5\!\times\!10^{-5}$--$10^{-4}$),
warmup (500--2000), and initialization, val CE at 2k--5.5k ranges
$7.548$--$7.651$ (std $\approx$0.03).
This robustness to optimization hyperparameters strengthens the interpretation
that 7.6 is an architectural ceiling, not a local minimum of the loss landscape.

\bibliographystyle{unsrtnat}

\end{document}